\pgfplotsset{width=10cm,compat=1.9}
\def\BState{\State\hskip-\ALG@thistlm}
\newcommand{\headpfun}{\mathbf{P}_{H}}
\newcommand{\tailpfun}{\mathbf{P}_{T}}
\newcommand{\headradius}{R_{H}}
\newcommand{\tailradius}{R_{T}}
\newcommand{\headarea}{C_H}
\newcommand{\tailarea}{C_T}
\newcommand{\setofrel}{\mathcal{R}}
  \providecommand\BibTeX{{%
    \normalfont B\kern-0.5em{\scshape i\kern-0.25em b}\kern-0.8em\TeX}}}
\definecolor{tea_green}{RGB}{214, 234, 193}
\definecolor{hint_green}{RGB}{226,246,209}
\definecolor{Madang}{RGB}{190,235,159}
\definecolor{yellow_green}{RGB}{198,222,119}
\definecolor{link_water}{RGB}{221, 232, 250}
\definecolor{celestial_blue}{RGB}{52, 152, 219}
\definecolor{shakespeare}{RGB}{85, 154, 193}
\definecolor{buttermilk}{RGB}{255,242,174}
\definecolor{chardonnay}{RGB}{250,196,114}
\definecolor{rajah}{RGB}{253,180,98}
\definecolor{fog}{RGB}{213, 193, 234}
\definecolor{melon}{RGB}{254,191,181}
\definecolor{sundown}{RGB}{249, 180, 181}
\definecolor{mona_lisa}{RGB}{246,152,134}
\definecolor{salmon}{RGB}{242,131,107}
\definecolor{saltpan}{RGB}{238, 243, 232}
\definecolor{aqua_spring}{RGB}{232, 243, 232}
\definecolor{tea_green}{RGB}{214, 234, 193}
\definecolor{Madang}{RGB}{190,235,159}
\definecolor{fringy_flower}{RGB}{194, 234, 193}
\definecolor{aero_blue}{RGB}{193, 234, 213}
\definecolor{pixie_green}{RGB}{183,214,170}
\definecolor{french_pass}{RGB}{195,232,246}
\definecolor{ice_cold}{RGB}{169,232,220}
\definecolor{pale_turquoise}{RGB}{172,240,242}
\definecolor{cruise}{RGB}{179,226,205}
\definecolor{sail}{RGB}{163,205,235}
\definecolor{spindle}{RGB}{179,205,227}
\definecolor{link_water}{RGB}{221, 232, 250}
\definecolor{periwinkle}{RGB}{203,213,232}
\definecolor{zanah}{RGB}{220, 233, 213}
\definecolor{frostee}{RGB}{217, 231, 214}
\definecolor{opal}{RGB}{199, 221, 211}
\definecolor{jet_stream}{RGB}{188, 214, 210}
\definecolor{skeptic}{RGB}{153, 187, 167}
\definecolor{hint_green}{RGB}{226,246,209}
\definecolor{snow_flurry}{RGB}{230,245,201}
\definecolor{surf_crest}{RGB}{205,230,208}
\definecolor{yellow_green}{RGB}{198,222,119}
\definecolor{cream}{RGB}{255,255,204}
\definecolor{pale_prim}{RGB}{255,255,179}
\definecolor{spring_sun}{RGB}{242,243,195}
\definecolor{portafino}{RGB}{245,237,160}
\definecolor{buttermilk}{RGB}{255,242,174}
\definecolor{cream_brulee}{RGB}{255, 229, 151}
\definecolor{dairy_cream}{RGB}{254,226,189}
\definecolor{champagne}{RGB}{254,217,166}
\definecolor{chardonnay}{RGB}{250,196,114}
\definecolor{manhattan}{RGB}{226,180,125}
\definecolor{rajah}{RGB}{253,180,98}
\definecolor{early_dawn}{RGB}{252,243,218}
\definecolor{egg_shell}{RGB}{238, 234, 215}
\definecolor{selago}{RGB}{243, 232, 243}
\definecolor{quartz}{RGB}{219,223,238}
\definecolor{fog}{RGB}{213, 193, 234}
\definecolor{languid_lavender}{RGB}{222,203,228}
\definecolor{watusi}{RGB}{254,221,207}
\definecolor{coral_andy}{RGB}{243,204,205}
\definecolor{cosmos}{RGB}{248,209,210}
\definecolor{melon}{RGB}{254,191,181}
\definecolor{azalea}{RGB}{234, 193, 194}
\definecolor{beauty_bush}{RGB}{235, 185, 179}
\definecolor{sundown}{RGB}{249, 180, 181}
\definecolor{mona_lisa}{RGB}{246,152,134}
\definecolor{salmon}{RGB}{242,131,107}
\definecolor{summer_sky}{RGB}{58, 151, 233}
\definecolor{chateau_green}{RGB}{72, 179, 96}
\definecolor{matisse}{RGB}{25, 104, 167}
\definecolor{allports}{RGB}{31, 106, 125}
\definecolor{sun_shade}{RGB}{255, 144, 68}
\definecolor{flamingo}{RGB}{237, 88, 85}
\definecolor{studio}{RGB}{128, 91, 160}
\definecolor{maya_blue}{RGB}{102, 204, 255}
\definecolor{feijoa}{RGB}{178,223,138}
\definecolor{sushi}{RGB}{117, 168, 47}
\definecolor{norway}{RGB}{158, 194, 132}
\definecolor{japanese_laurel}{RGB}{53, 116, 40}
\definecolor{see_green}{RGB}{161,228,195}
\definecolor{monte_carlo}{RGB}{135,204,194}
\definecolor{granny_smith_apple}{RGB}{150,214,150}
\definecolor{moss_green}{RGB}{170,216,176}
\definecolor{chateau_green}{RGB}{72, 179, 96}
\definecolor{opal}{RGB}{164,207,190}
\definecolor{acapulco}{RGB}{117, 170, 148}
\definecolor{viridian}{RGB}{55, 137, 122}
\definecolor{amazon}{RGB}{56, 123, 84}
\definecolor{asparagus}{RGB}{123, 160, 91}
\definecolor{fruit_salad}{RGB}{91, 160, 94}
\definecolor{puerto_rico}{RGB}{72, 179, 150}
\definecolor{mountain_meadow}{RGB}{0, 163, 136}
\definecolor{matisse}{RGB}{25, 104, 167}
\definecolor{allports}{RGB}{31, 106, 125}
\definecolor{astral}{RGB}{55, 111, 137}
\definecolor{spring_leaves}{RGB}{46, 83, 117}
\definecolor{biscay}{RGB}{44, 62, 80}
\definecolor{midnight}{RGB}{0, 29, 50}
\definecolor{amethyst}{RGB}{153, 102, 204}
\definecolor{studio}{RGB}{128, 91, 160}
\definecolor{tapestry}{RGB}{194, 109, 132}
\definecolor{atomic_tangerine}{RGB}{255, 153, 102}
\definecolor{amber}{RGB}{255, 191, 0}
\definecolor{casablanca}{RGB}{244, 178, 84}
\definecolor{california}{RGB}{233, 140, 58}
\definecolor{tomato}{RGB}{255, 97, 56} 
\definecolor{alizarin}{RGB}{233, 58, 64}
\definecolor{linen}{RGB}{251, 239, 227}
\definecolor{double_pearl_lusta}{RGB}{253, 242, 208}
\definecolor{oasis}{RGB}{253, 242, 208}
\definecolor{milan}{RGB}{255, 254, 169}
\definecolor{texas}{RGB}{245, 232, 123}
\definecolor{maize}{RGB}{249, 212, 156}
\definecolor{turmeric}{RGB}{211, 178, 76}
\definecolor{saffron}{RGB}{249,193,62}
\definecolor{my_sin}{RGB}{255, 176, 59}
\definecolor{tree_poppy}{RGB}{246, 154, 27}
\definecolor{jaffa}{RGB}{240, 131, 58}
\definecolor{crusta}{RGB}{254, 127, 44}
\definecolor{tahiti_gold}{RGB}{223, 102, 36}
\definecolor{outrageous_orange}{RGB}{255, 100, 45}
\definecolor{safety_orange}{RGB}{254, 106, 0}
\definecolor{azalea}{RGB}{251, 196, 196}
\definecolor{oyster_pink}{RGB}{238,206,205} 
\definecolor{coral_candy}{RGB}{242,208,205} 
\definecolor{baby_pink}{RGB}{246, 194, 192}
\definecolor{petite_orchid}{RGB}{223, 157, 155}
\definecolor{apricot}{RGB}{241,140,122}
\definecolor{NY_pink}{RGB}{228,136,113}
\definecolor{carmine_pink}{RGB}{231, 76, 60}
\definecolor{deep_carmine_pink}{RGB}{236, 50, 67}
\definecolor{wewak}{RGB}{244, 143, 150}
\definecolor{light_coral}{RGB}{244, 127, 123}
\definecolor{bittersweet}{RGB}{255,111,105}
\definecolor{carnation}{RGB}{245, 80, 86}
\definecolor{flamingo}{RGB}{237, 88, 85}
\definecolor{sunset_orange}{RGB}{242,89,75}
\definecolor{ku_crimson}{RGB}{243, 0, 25}
\definecolor{amaranth}{RGB}{234,46,73}
\definecolor{valencia}{RGB}{214, 87, 70}
\definecolor{chilean_fire}{RGB}{215, 87, 44}
\definecolor{mexican_red}{RGB}{170, 41, 37}
\definecolor{napa}{RGB}{163, 154, 137}
\definecolor{athens_gray}{RGB}{236, 240, 241}
\definecolor{gallery}{RGB}{240,240,240}
\definecolor{mercury}{RGB}{230,230,230}
\definecolor{platinum}{RGB}{228,228,228}
\definecolor{silver}{RGB}{191,191,191}
\definecolor{aluminum}{RGB}{153,153,153}
\definecolor{ship_gray}{RGB}{77,77,77}
\definecolor{tuatara}{RGB}{67, 67, 67}
\definecolor{malibu}{RGB}{110, 180, 240}
\definecolor{celestial_blue}{RGB}{52, 152, 219}
\definecolor{curious_blue}{RGB}{41, 128, 185}
\definecolor{french_blue}{RGB}{0, 112, 182}
\definecolor{matisse}{RGB}{25, 104, 167}
\definecolor{shakespeare}{RGB}{85, 154, 193}
\definecolor{seagull}{RGB}{128,177,211}
\definecolor{jelly_bean}{RGB}{45, 126, 150}
\definecolor{venice_blue}{RGB}{87, 135, 105}
\definecolor{boston_blue}{RGB}{68, 147, 161}
\definecolor{turquoise}{RGB}{41,217,194}
\definecolor{java}{RGB}{2,190,196}
\definecolor{riptide}{RGB}{141,211,199}
\definecolor{mountain_meadow}{RGB}{0, 163, 136}
\definecolor{free_speech_aquamarine}{RGB}{0, 156, 114}
\definecolor{cosmic_latte}{RGB}{222, 247, 229}
\definecolor{chinook}{RGB}{163, 232, 178}
\definecolor{padua}{RGB}{121, 189, 143}
\definecolor{ocean_green}{RGB}{79, 176, 112}
\definecolor{pastel_green}{RGB}{107, 227, 135}
\definecolor{chateau_green}{RGB}{69, 191, 85}
\definecolor{RoyalBlue}{RGB}{69, 191, 85}
\definecolor{pigment_green}{RGB}{0, 175, 79}
\definecolor{fern}{RGB}{101,197,117}
\definecolor{killarney}{RGB}{56, 113, 66}
\definecolor{quartz}{RGB}{219,223,238}
\definecolor{spring_sun}{RGB}{242,243,195}
\definecolor{dairy_cream}{RGB}{254,226,189}
\definecolor{surf_crest}{RGB}{205,230,208}
\definecolor{french_pass}{RGB}{195,232,246}
\definecolor{cosmos}{RGB}{248,209,210}
\definecolor{portafino}{RGB}{245,237,160}
\definecolor{sail}{RGB}{163,205,235}
\definecolor{hint_green}{RGB}{226,246,209}
\definecolor{bittersweet}{RGB}{255,111,105}
\definecolor{java}{RGB}{2,190,196}
\definecolor{ice_cold}{RGB}{169,232,220}
\definecolor{bgc}{RGB}{245,245,245}
\definecolor{tuatara}{RGB}{67, 67, 67}
\definecolor{aluminum}{RGB}{153,153,153}
\definecolor{silver}{RGB}{191,191,191}
\definecolor{platinum}{RGB}{228,228,228}
\definecolor{mercury}{RGB}{230,230,230}
\definecolor{gallery}{RGB}{240,240,240}
\definecolor{free_speech_aquamarine}{RGB}{0, 156, 114}
\definecolor{sun_shade}{RGB}{255, 144, 68}
\definecolor{fern}{RGB}{101,197,117}
\definecolor{french_blue}{RGB}{0, 112, 182}
\definecolor{matisse}{RGB}{25, 104, 167}
\definecolor{sushi}{RGB}{117, 168, 47}
\definecolor{shakespeare}{RGB}{85, 154, 193}
\definecolor{egg_shell}{RGB}{238, 234, 215}
\definecolor{carnation}{RGB}{245, 80, 86}
\definecolor{flamingo}{RGB}{237, 88, 85}
\definecolor{jet_stream}{RGB}{188, 214, 210}
\definecolor{jelly_bean}{RGB}{45, 126, 150}
\definecolor{tree_poppy}{RGB}{246, 154, 27}
\definecolor{deep_carmine_pink}{RGB}{236, 50, 67}
\definecolor{copper_rust}{RGB}{155, 64, 74}
\definecolor{midnight}{RGB}{0, 29, 50}
\definecolor{chilean_fire}{RGB}{215, 87, 44}
\definecolor{puerto_rico}{RGB}{94, 194, 166}
\definecolor{japanese_laurel}{RGB}{53, 116, 40}
\definecolor{fire_engine_red}{RGB}{206, 37, 51}
\definecolor{ku_crimson}{RGB}{243, 0, 25}
\definecolor{turmeric}{RGB}{211, 178, 76}
\definecolor{tahiti_gold}{RGB}{223, 102, 36}
\definecolor{outrageous_orange}{RGB}{255, 100, 45}
\definecolor{crusta}{RGB}{254, 127, 44}
\definecolor{safety_orange}{RGB}{254, 106, 0}
\definecolor{pigment_green}{RGB}{0, 175, 79}
\definecolor{jaffa}{RGB}{240, 131, 58}
\definecolor{jet_stream}{rgb}{0.69,0.61,0.85}
\definecolor{jelly_bean}{rgb}{0.47,0.32,0.66}
\definecolor{azalea}{RGB}{251, 196, 196}
\definecolor{sundown}{RGB}{249, 180, 181}
\definecolor{light_coral}{RGB}{244, 127, 123}
\definecolor{wewak}{RGB}{244, 143, 150}
\definecolor{biscay}{RGB}{44, 62, 80}
\definecolor{carmine_pink}{RGB}{231, 76, 60}
\definecolor{athens_gray}{RGB}{236, 240, 241}
\definecolor{celestial_blue}{RGB}{52, 152, 219}
\definecolor{curious_blue}{RGB}{41, 128, 185}
\definecolor{my_sin}{RGB}{255, 176, 59}
\definecolor{viridian}{RGB}{70, 137, 102}
\definecolor{tomato}{RGB}{255, 97, 56}
\definecolor{mountain_meadow}{RGB}{0, 163, 136}
\definecolor{padua}{RGB}{121, 189, 143}
\definecolor{killarney}{RGB}{56, 113, 66}
\definecolor{ocean_green}{RGB}{79, 176, 112}
\definecolor{pastel_green}{RGB}{107, 227, 135}
\definecolor{chinook}{RGB}{163, 232, 178}
\definecolor{cosmic_latte}{RGB}{222, 247, 229}
\definecolor{chateau_green}{RGB}{69, 191, 85}
\definecolor{RoyalBlue}{RGB}{69, 191, 85}
\definecolor{blue0}{RGB}{240,249,232}
\definecolor{blue1}{RGB}{204,235,197}
\definecolor{blue2}{RGB}{168,221,181}
\definecolor{blue3}{RGB}{123,204,196}
\definecolor{blue4}{RGB}{78,179,211}
\definecolor{blue5}{RGB}{43,140,190}
\definecolor{blue6}{RGB}{8,88,158}
\definecolor{yellow0}{RGB}{255,255,212}
\definecolor{yellow1}{RGB}{254,227,145}
\definecolor{yellow2}{RGB}{254,196,79}
\definecolor{yellow3}{RGB}{254,153,41}
\definecolor{yellow4}{RGB}{236,112,20}
\definecolor{yellow5}{RGB}{204,76,2}
\definecolor{yellow6}{RGB}{140,45,4}
\begin{document}

\title{Exploiting Global Semantic Similarities in Knowledge Graphs by Relational Prototype Entities}




\author{Xueliang Wang, Jiajun Chen, Feng Wu, Jie Wang}
\affiliation{%
  \institution{University of Science and Technology of China}
  \city{Hefei}
  \state{Anhui}
  \country{China}
  }
\email{{xlwang95, jiajun98}@mail.ustc.edu.com,  	{fengwu, jiewangx}@ustc.edu.cn}





\begin{abstract}
Knowledge graph (KG) embedding aims at learning the latent representations for entities and relations of a KG in continuous vector spaces. An empirical observation is that the head (tail) entities connected by the same relation often share similar semantic attributes---specifically, they often belong to the same category---no matter how far away they are from each other in the KG; that is, they share global semantic similarities. However, many existing methods derive KG embeddings based on the local information, which fail to effectively capture such global semantic similarities among entities. To address this challenge, we propose a novel approach, which introduces a set of virtual nodes called \textit{\textbf{relational prototype entities}} to represent the prototypes of the head and tail entities connected by the same relations. By enforcing the entities' embeddings close to their associated prototypes' embeddings, our approach can effectively encourage the global semantic similarities of entities---that can be far away in the KG---connected by the same relation. Experiments on the entity alignment and KG completion tasks demonstrate that our approach significantly outperforms recent state-of-the-arts.
\end{abstract}


\begin{CCSXML}
<ccs2012>
<concept>
<concept_id>10002951.10003227.10003351</concept_id>
<concept_desc>Information systems~Data mining</concept_desc>
<concept_significance>500</concept_significance>
</concept>
<concept>
<concept_id>10010147.10010178.10010179</concept_id>
<concept_desc>Computing methodologies~Natural language processing</concept_desc>
<concept_significance>500</concept_significance>
</concept>
</ccs2012>
\end{CCSXML}

\ccsdesc[500]{Information systems~Data mining}
\ccsdesc[500]{Computing methodologies~Natural language processing}

\keywords{knowledge graph embedding, relational prototype entities, entity alignment, knowledge graph completion}



\maketitle

\section{Introduction}
Knowledge graphs (KGs) store a wealth of human knowledge in a structured way, which are usually collections of the factual triple: (\textit{head entity, relation, tail entity}). A large number of real-world KGs, such as Freebase \cite{freebase}, DBpedia \cite{dbpedia}, and YAGO \cite{yago}, have been created and successfully applied to a wide range of applications, such as information retrieval \cite{xiong2017explicit}, recommendation systems \cite{KGRS,kgat}, question answering \cite{KGQA}, etc.

Although commonly used KGs contain billions of triples, a single KG is usually far from complete and may not support these applications with sufficient facts \cite{rsn}. To tackle this issue, two KG tasks know as \textit{entity alignment} and \textit{KG completion} are proposed and have achieved massive attention recently \cite{transe,transd,rsn,ntn,mlp,convkb,gmnn,rdgcn,avrgcn}. Entity alignment, a.k.a., entity resolution or matching, aims to identify entities from different KGs that refer to the same real-world object. KG completion, a.k.a., link prediction, aims to automatically predict missing links between entities based on known links.

Inspired by word embedding \cite{mikolov2013distributed} that can well capture semantic meaning of words, researchers have paid increasing attention to \textit{knowledge graph embedding} \cite{transe,gcn-align,rotate,ptranse,chen2018co,rescal,hole,analogy}. The key idea of KG embedding is to embed entities and relations of a KG into continuous vector spaces, so as to simplify the manipulation while preserving the inherent structures of the KG \cite{KGEsurvey}.
For example, the distance-based models, such as TransE \cite{transe}, TransH \cite{transh}, TransR \cite{transr}, and RotatE \cite{rotate},  measure the plausibility of a fact as the distance between the two entities after an operation (e.g., translation or rotation) carried out by the relation. Moreover, the semantic matching models, such as RESCAL \cite{rescal}, DistMult \cite{distmult}, HolE \cite{hole}, ComplEx \cite{complex} and ConvE \cite{conve}, measure plausibility of facts by matching latent representations of entities and relations embodied in their vector spaces. More recently, the GCN-based models, such as R-GCN \cite{rgcn}, GCN-Align \cite{gcn-align}, MuGNN \cite{mugnn}, and AliNet \cite{AliNet}, are proposed to make use of the structure information in the KG, where the embedding of an entity is learned by recursively aggregating the embeddings of its neighbor entities. 

 \begin{figure*}[ht]
     \centering
     \includegraphics[scale=0.262]{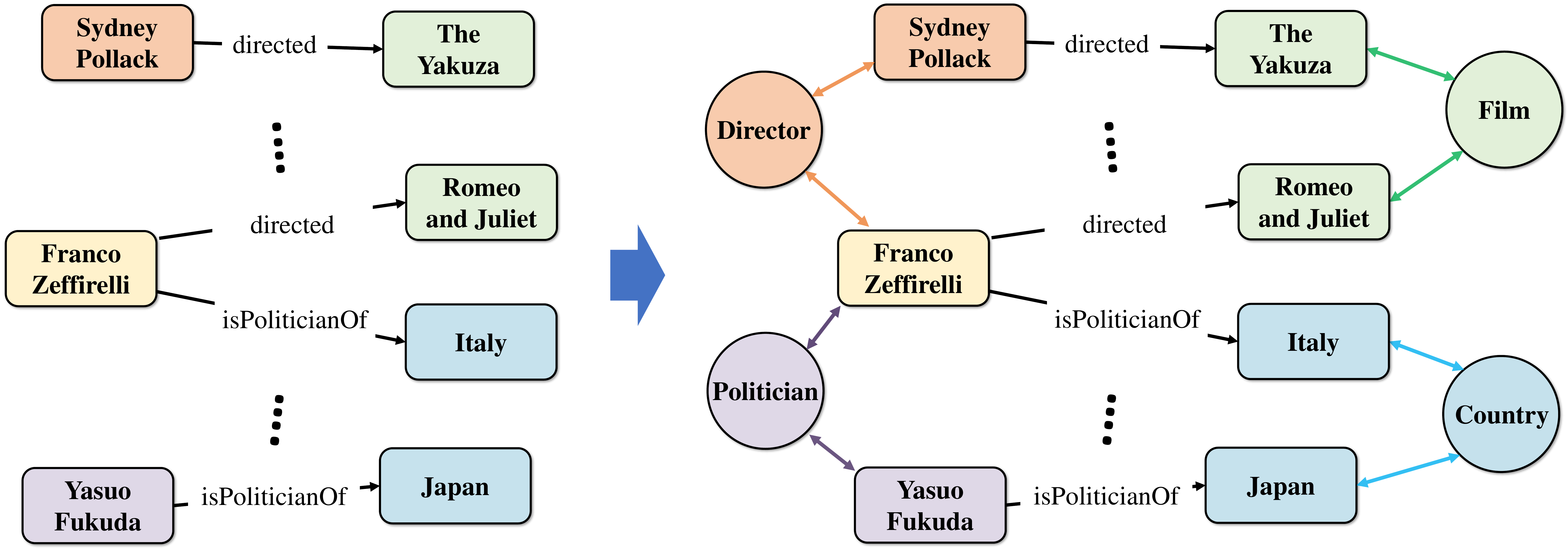}
     
     \caption{Illustration of our proposed method. The left is the original knowledge graph, and the right is the knowledge graph with relational prototype entities.
     In our method, we connect the head (tail) entities connected by the same relation to the associated head (tail) relational prototype entities. In the figure, the square blocks represent entities, and the round blocks represent relational prototype entities. 
     Notice that, the entity \textit{Franco Zeffirelli} is connected to two relational prototype entities at the same time.}
     
     \label{fig:figure1}
 \end{figure*}

While promising, many existing KG embedding models derive embeddings by using the local information of entities in the KG. They learn the embedding of an entity by aggregating the embeddings of its one-hop neighbors (e.g., TransE and RESCAL) or multi-hop neighbors (e.g., multi-layer GCNs).
An empirical observation is that the head (tail) entities connected by the same relation often share similar semantic attributes---specifically, they often belong to the same category---no matter how far away they are from each other in the KG; that is, they share global semantic similarities \cite{guo2015semantically,xie2016representation,krompass2015type,transt}.
For example, given three triples: (\textit{Sydney Pollack, directed, ?}), (\textit{Franco Zeffirelli,  directed, ?}), and (\textit{Yasuo Fukuda, isPoliticianOf, ?}),
the former two predicted entities should belong to the category of ``film'', thereby the embeddings of them should be more similar. 
However, many existing KG embedding methods do not explicitly model such global semantic similarities among entities, which results in that the embeddings are not expressive enough.

To tackle the aforementioned challenge, we innovatively introduce a set of virtual nodes called \textit{\textbf{relational prototype entities}}. 
Specifically, for each relation, we define a head (tail) category for the head (tail) entities connected by this relation; that is, if we have $N$ relations in the KG, we will define $N$ head categories and $N$ tail categories, respectively.
The proposed relational prototype entities represent the prototypes for each category.
We then connect the entities from the same category--no matter how far away they are from each other in the KG--to the associated  relational prototype entities. In this way, the proposed relational prototype entities can serve as the junctions to enhance the interaction of entities' embeddings from the same category.
Notice that, an entity may be connected to more than one relational prototype entity. 
For example, given two triples: (\textit{Franco Zeffirelli,  directed, Romeo and Juliet}) and (\textit{Franco Zeffirelli, isPoliticianOf, Italy}), the head entity \textit{Franco Zeffirelli}---which belongs to the categories of ``director'' and ``politician''---will be connected to the head relational prototype entities of \textit{directed} and  \textit{isPoliticianOf} at the same time. We show an illustration in Figure~\ref{fig:figure1}.

Our proposed method will enforce the entities' embeddings close to their associated prototypes' embeddings, so as to effectively encourage the global semantic similarities of entities---that can be far away in the KG---connected by the same relation. 
In this way, the embeddings in the KG can be more expressive, especially for the long-tail entities \cite{rsn} whose local information is not rich.
To show the effectiveness of our proposed method, we equip the popular GCN \cite{gcn} and RotatE \cite{rotate} models with our method and apply them to the entity alignment and KG completion tasks, respectively. 
Experiments show that our proposed method can not only effectively capture the global semantic similarities in the KG, but also significantly outperform recent state-of-the-art methods on the benchmark datasets.

\section{Notations and Background}\label{notations and background}
We first introduce the notations used in the paper in Section \ref{notations}. Then, we review the background of the entity alignment and KG completion tasks in Sections \ref{background entity alignment} and \ref{background KG completion}, respectively. 
\subsection{Notations}\label{notations}
In this paper, we formally represent a KG as $\mathcal{G}=(\mathcal{E},\mathcal{R},\mathcal{T})$, where $\mathcal{E}$ is the set of entities, $\mathcal{R}$ is the set of relations, and $\mathcal{T}\subset \mathcal{E}\times \mathcal{R} \times \mathcal{E}$ is the set of relational triples. We use the triple $(h,r,t)$ to denote a fact in the KG, where the lower-case letters $h$, $r$, and $t$ denote head entities, relations, and tail entities, respectively. The boldface lower-case letters $\textbf{h}$, $\textbf{r}$,
and $\textbf{t}$ represent the corresponding vectors, and we use $[\textbf{h}]_i$ to denote the $i$-th entry of the vector $\textbf{h}$. Let $k$ denote the the embedding dimension.
Given a set $\mathcal{S}$, we use $|\mathcal{S}|$ to denote the number of elements in $\mathcal{S}$.  We use $\lVert\cdot\rVert$ to denote the Euclidean norm. Let $\circ:\mathbb{R}^n\times\mathbb{R}^n\rightarrow\mathbb{R}^n$ denote the Hadamard product between two vectors, i.e.,
\begin{align}
  [\textbf{a}\circ \textbf{b}]_i=[\textbf{a}]_i\cdot [\textbf{b}]_i  
\end{align}


\subsection{Entity Alignment}\label{background entity alignment}
The entity alignment task aims to find entities from different KGs that refer to the same real-world object. Formally, given two KGs, i.e., $\mathcal{G}_1=(\mathcal{E}_1,\mathcal{R}_1,\mathcal{T}_1)$ and $\mathcal{G}_2=(\mathcal{E}_2,\mathcal{R}_2,\mathcal{T}_2)$, the goal of this task is to find alignment of entities between $\mathcal{G}_1$ and $\mathcal{G}_2$ based on the
partial pre-aligned entity pairs $\mathcal{A}^+=\{(i,j)\in \mathcal{E}_1\times \mathcal{E}_2|i\equiv j\}$, where $\equiv$ represents the alignment relationship. 

A critical  challenge of the entity alignment task lies in the heterogeneity of structures because counterpart entities in different KGs usually have dissimilar neighborhood structures \cite{pujara2013knowledge}.
Recently, the GCN-based embedding models have shown great potentials in dealing with the heterogeneity problem of the entity alignment task in both monolingual and cross-lingual scenarios \cite{gcn-align,AliNet,mugnn}. The GCN-based models update the embedding of an entity by aggregating the embeddings of its neighbors, and the similarity of entities is measured by the distance of entity embeddings.

A popular variant of GCNs is the multi-layer vanilla GCN proposed by \cite{gcn}. Every GCN encoder takes the hidden states of entity embeddings in the current layer as input, and outputs new entity embeddings with the following  layer-wise propagation rule
\begin{align}\label{gcn rule}
    \textbf{e}_i^{(l)}=\rho\left(\frac{\sum_{j\in \mathcal{N}_1(i)\cup\{i\}}\textbf{W}^{(l)}\textbf{e}_j^{(l-1)}}{| \mathcal{N}_1(i)\cup\{i\}|}\right),
\end{align}
where $\textbf{e}_i^{(l)}$ is the embedding of entity $i$ at the $l$-th layer, $\mathcal{N}_1(i)$ is the set of one-hop entity neighbors of entity $i$, $\textbf{W}^{(l)}$ is the trainable parameters at the $l$-th layer, and
$|\mathcal{N}_1(i)\cup\{i\}|$ is the number of elements in the set $\mathcal{N}_1(i)\cup\{i\}$. $\rho$ represents the activation function. 

GCN-Align \cite{gcn-align} uses the vanilla GCN \cite{gcn} to embed entities of different KGs into a unified vector space, and entity alignments are discovered based on the distances between entities in the embedding space. Other extensions including GMNN \cite{gmnn}, RDGCN \cite{rdgcn} and AVR-GCN \cite{avrgcn}. More recently, MuGNN \cite{mugnn} proposes a two-step method of rule-based KG completion and multi-channel GNNs for entity alignment. The learned rules rely on relation alignment to resolve schema heterogeneity. AliNet \cite{AliNet} proposes gated multi-hop neighborhood aggregation to mitigate the non-isomorphism of neighborhood structures in KGs. However, the most aforementioned models focus on the local structure information (e.g., one-hop or multi-hop neighbors), which do not explicitly consider the global semantic similarities among entities mentioned above.

\subsection{KG Completion}\label{background KG completion}
The KG completion task aims to complete the missing facts in a single KG. Formally, given a KG $\mathcal{G}=(\mathcal{E},\mathcal{R},\mathcal{T})$, it aims to predict the head entity $h$ given $(?, r, t)$ or predict the tail entity $t$ given $(h, r, ?)$. To tackle this problem, a general approach is to define a score function $f_r(\textbf{h},\textbf{t})$ to measure the plausibility of each fact $(h,r,t)$. The goal of the optimization is to score true facts $(h,r,t)$ observed in the KG higher than false facts $(h',r,t)$ or $(h,r,t')$.

 A straightforward method to tackle the KG completion task is to model $f_r(\textbf{h},\textbf{t})$ as a distance-based function. For example, the popular TransE \cite{transe} and its extentions, such as TransH \cite{transh}, TransR \cite{transr}, TransD \cite{transd}, and TranSparse \cite{transparse}, model relations as translations. More recently, the RotatE model \cite{rotate} defines each relation as a rotation instead of a translation from the head entity to the tail entity in the complex vector space. The score function $f_r(\textbf{h},\textbf{t})$ of RotatE is defined as
\begin{align}\label{rotate function}
    f_r(\textbf{h},\textbf{t})=-\|\textbf{h}\circ\textbf{r}-\textbf{t}\|,
\end{align}
where $\textbf{h}$, $\textbf{r}$, $\textbf{t}\in \mathbb{C}^k$, and the modulus of $[\textbf{r}]_i$ is 1. Compared to the existing translational models, RotatE can better model symmetry relations \cite{rotate}. Notice that, the aforementioned methods are triple-level learning methods \cite{rsn}, and most of them only consider the one-hop neighbors and do not explicitly model the global semantic similarities that the head and tail entities connected by the same relation often belong to the same category, respectively.

\section{Methods} \label{methods}
As mentioned above, many knowledge graph embedding methods mainly focus on using local information in the KG, which do not explicitly consider the global semantic similarities among entities. 
Inspired by the empirical observation that the head (tail) entities connected by the same relation often share similar semantic attributes, 
we innovatively introduce a set of relational prototype entities $P(r)$, and we use ${\mathbf{P}(r)}$ to represent the corresponding vector. Specifically, for each relation $r$, we define a head relational prototype entity $P_{H}(r)$ and a tail relational prototype entity $P_{T}(r)$ to serve as the prototypes for the head and tail categories of each relation. $\headpfun(r)$ and $\tailpfun(r)$ are the corresponding vectors of $P_{H}(r)$ and $P_{T}(r)$. Then, we add the head and tail relational prototype entities to the KG $\mathcal{G}=(\mathcal{E},\mathcal{R},\mathcal{T})$ in a simple yet effective way. We connect the entities from the same category to the associated  relational prototype entities. 
Thereby, the proposed relational prototype entities can serve as the junctions to enhance the interaction of entity embeddings from the same category.  We show a simple illustration of our method in Figure~\ref{illustration2}.

The entity alignment and KG completion are two fundamental tasks in KG embedding \cite{rsn}. To show the effectiveness and universality of our proposed approach, we equip the popular GCN \cite{gcn} and RotatE \cite{rotate} models with relational prototype entities and apply them to the entity alignment and KG completion tasks, respectively. In Sections \ref{method entity alignment} and \ref{method kg completion}, we detail the proposed methods for the two tasks. In Section \ref{analysis}, we analyze the proposed methods.

\subsection{Entity Alignment}\label{method entity alignment}
As introduced in Section \ref{background entity alignment}, the GCN-based embedding models \cite{gcn-align,rdgcn,avrgcn,mugnn,AliNet} have achieved great success on the entity alignment task recently. Therefore, we use the vanilla GCN \cite{gcn} introduced in Section \ref{background entity alignment} as our baseline model. The baseline GCN gets the embedding of entity $i$ by aggregating the embeddings of its one-hop entity neighbors by \eqref{gcn rule}.
In our proposed method, we add relational prototype entities to the KG. Thus, we redefine the propagation rule in \eqref{gcn rule} as
\begin{align}
    &\textbf{e}_i^{(l)}\hspace{-1mm}=\hspace{-1mm}\rho\hspace{-0.6mm}\left(\frac{\lambda\hspace{-0.6mm}\sum_{j\in \mathcal{N}_1(i)\cup\{i\}}\textbf{W}^{(l)}\textbf{e}_j^{(l\hspace{-0.3mm}-\hspace{-0.3mm}1)}\hspace{-1mm}+\hspace{-0.8mm}(1\hspace{-0.6mm}-\hspace{-0.6mm}\lambda)\sum_{P(r)\in\mathcal{N}_1^R(i)}\textbf{W}^{(l)}{\mathbf{P}^{(l\hspace{-0.3mm}-\hspace{-0.3mm}1)}(r)}}{\lambda|\mathcal{N}_1(i)\cup\{i\}|+(1-\lambda)|\mathcal{N}_1^R(i)|}\right),\label{gcn rule new1}\\
    &\mathbf{P}^{(l)}(r)=\rho\left(\frac{\lambda\sum_{j\in \mathcal{N}_1(P(r))}\textbf{W}^{(l)}\textbf{e}_j^{(l-1)}+(1-\lambda)\mathbf{W}^{(l)}\mathbf{P}^{(l-1)}(r)}{\lambda|\mathcal{N}_1(P(r))|+1-\lambda}\right)\label{gcn rule new2},
\end{align}
where $\mathcal{N}_1(i)$ is the set of one-hop entity neighbors of entity $i$, $\mathcal{N}_1^R(i)$ is set of relational prototype entities directly connected to the entity $i$, and $\mathcal{N}_1(P(r))$ is the set of one-hop entity neighbors of the relational prototype entity $P(r)$. $0<\lambda\le1$ is the hyperparameter that balances the weight. Notice that, when $\lambda=1$, \eqref{gcn rule new1} is the same as \eqref{gcn rule}; that is, the relational prototype entities will not affect the embeddings of entities in the original KG. 
In our method, the entities connected by the same relation can share global semantic similarities through relational prototype entities, and we call this modified model \textbf{RPE-GCN}.

Many GCN-based methods use the hidden outputs at the last layer as the final embedding of each entity, i.e., $\textbf{e}_i=\textbf{e}_i^{(L)}$, where $L$ is the number of GCN layers. However, the hidden representations of all layers could contribute to propagating alignment information \cite{AliNet}. Therefore, we get the embedding of each entity by aggregating the hidden states of all the layers following \cite{AliNet}, i.e.,
\begin{align}\label{aggragation}
    \textbf{e}_i=\frac{1}{L}\sum_{l=1}^{L}\textbf{e}_i^{(l)}.
\end{align}

\textbf{Loss Function}. On the entity alignment task, we usually judge whether two entities are equivalent by the distance between them. We want the embeddings of aligned entities to have a small distance while those of unaligned entities have a larger distance. Following \cite{mugnn}, the loss function is given as below
\begin{align}\label{loss function entity alignment}
    \mathcal{L}=\sum_{(i,j)\in \mathcal{A}^+}\sum_{(i',j')\in \mathcal{A}^-}[\|\textbf{e}_i-\textbf{e}_j\|+\gamma-\|\textbf{e}_{i'}-\textbf{e}_{j'}\|]_+,
\end{align}
where $\mathcal{A}^-$ is the set of negative entity alignment pairs of $\mathcal{A}^+$, $[\cdot]_+={\rm max}\{0,\cdot\}$ is the maximum between 0 and the input, and $\gamma>0$ is the margin hyperparameter separating positive and negative entity alignments. Following \cite{mugnn,sun2018bootstrapping}, we select 25 entities closest to the corresponding entity  as negative samples by calculating cosine similarity in the same KG, and negative samples will be recalculated every 5 epochs. 

\begin{figure}
  \centering
  \includegraphics[width=0.95\linewidth]{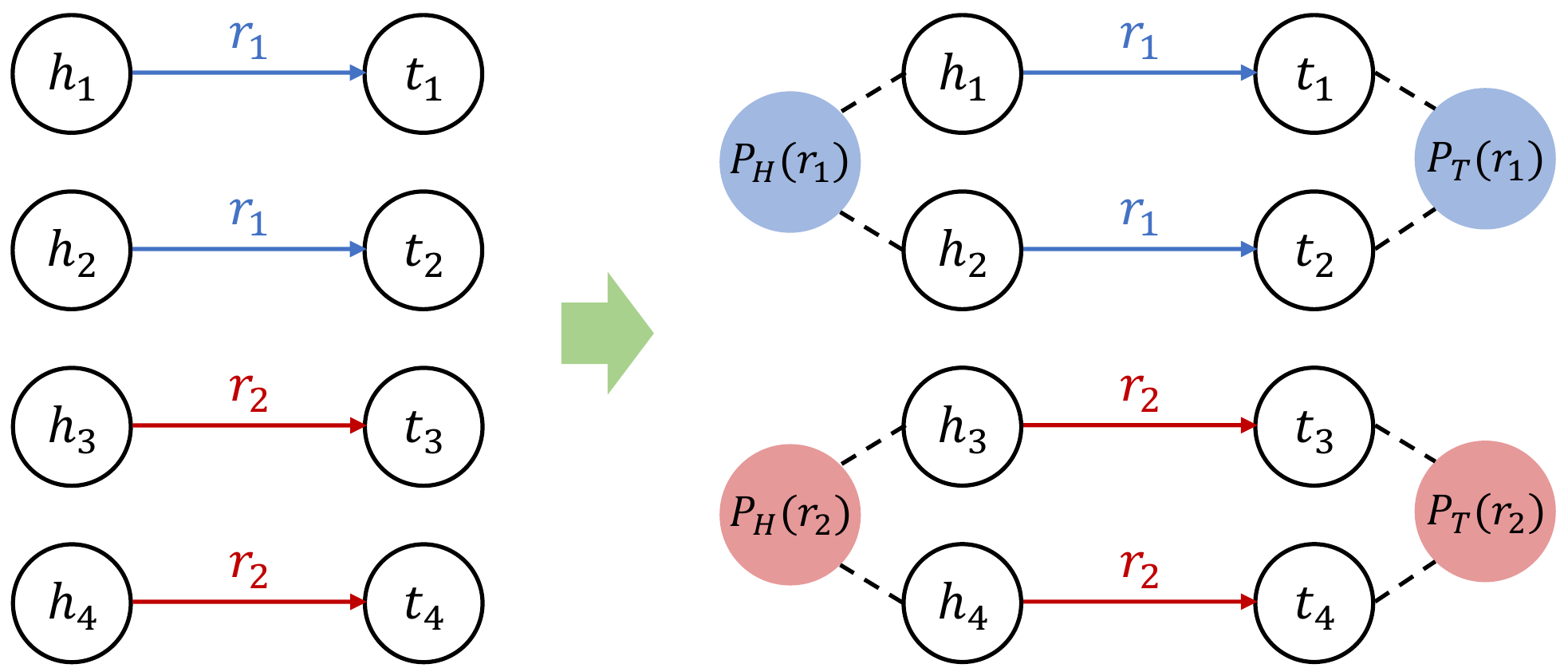}
  \caption{A simple illustration of our proposed method. $P_{H}(r_1)$ and $P_{T}(r_1)$ represent the head and tail relational prototype entities of relation $r_1$, respectively. $P_{H}(r_2)$ and $P_{T}(r_2)$ represent the head and tail relational prototype entities of relation $r_2$, respectively.}
  \label{illustration2}
\end{figure}

\subsection{KG Completion}\label{method kg completion}
As discussed in Section \ref{background KG completion}, many KG completion methods are triple-level learning methods \cite{rsn}, which fail to explicitly model the global semantic similarities among entities. To tackle the limitations, we redefine the entity embeddings by incorporating the embeddings of the associated relational prototype entities. We reformulate the score function used in the KG completion task as $f_r(g(\textbf{h},\headpfun(r)),\,g(\textbf{t},\tailpfun(r)))$, where $g(\cdot)$ is the aggregation function. 

In this paper, we choose the popular RotatE \cite{rotate} as the baseline model in the experiments. We then redefine the score function in \eqref{rotate function} as:
\begin{align*}
    f_r(&g(\textbf{h},\headpfun(r)),\,g(\textbf{t},\tailpfun(r)))=\\
    &-\|(\lambda\textbf{h}+(1-\lambda)\headpfun(r))\circ\textbf{r}-(\lambda\textbf{t}+(1-\lambda)\tailpfun(r))\|,
\end{align*}
where $0<\lambda\le1$ is the hyperparameter to balance the weight, and we choose the sum operation as the aggregation function. We call the modified model \textbf{RPE-RotatE}.

\textbf{Loss Function.} To train the model, we use the negative sampling loss functions with self-adversarial training \cite{rotate}:
\begin{align*}
    L=&-\log\sigma(\gamma-f_r(g(\textbf{h},\headpfun(r)),\,g(\textbf{t},\tailpfun(r))))\\&-\sum_{i=1}^np(h'_i,r,t'_i)\log\sigma(f_r(g(\textbf{h}_i',\headpfun(r)),\,g(\textbf{t}_i',\tailpfun(r)))-\gamma),
\end{align*}
where $\gamma>0$ is a fixed margin, $\sigma$ is the sigmoid function, and $(h'_i,r,t'_i)$ is the $i$-th negative triple. Moreover,
\begin{align*}
    p(h'_j,r,t'_j|\{(h_i,r_i,t_i)\})=\frac{\exp \alpha f_r(g(\textbf{h}_j',\headpfun(r)),\,g(\textbf{t}_j',\tailpfun(r)))}{\sum_i \exp \alpha f_r(g(\textbf{h}_i',\headpfun(r)),\,g(\textbf{t}_i',\tailpfun(r)))}
\end{align*}
is the probability distribution of sampling negative triples, where $\alpha$ is the temperature of sampling.

\subsection{Analysis}\label{analysis}
This section provides the theoretical analysis of our proposed method. We take the KG completion task as an example. We first prove that our method can enforce the entities’ embeddings close to their associated prototypes’ embeddings. Then, we prove that such property will benefit the performance.
For the analysis of the entity alignment task, please refer to the Appendix \ref{analysis appendix}. 

We begin with the assumption that the embeddings of different relational prototype entities are different. We then define the head and tail relational prototype areas $\headarea(r)$ and $\tailarea(r)$ as
\begin{align*}
&\headarea(r) = \{\textbf{x} \,|\, \lVert \textbf{x}-\headpfun(r) \rVert \le \headradius(r) \},\\
&\tailarea(r) = \{\textbf{x} \,|\, \lVert \textbf{x}-\tailpfun(r) \rVert \le \tailradius(r) \},
\end{align*}
where $\headradius(r)$ and $\tailradius(r)$ are given by
\begin{align*}
    &\headradius(r)={\rm max}\{\|\textbf{h}-\headpfun(r)\|: (h,r,t)\in \mathcal{T}\},\\
    &\tailradius(r)={\rm max}\{\|\textbf{t}-\tailpfun(r)\|: (h,r,t)\in \mathcal{T}\}.
\end{align*}
The set of relational prototype areas is defined as 
\begin{align*}
\mathcal{C} = \cup_{r\in\setofrel}(\{\headarea(r)\} \cup \{\tailarea(r)\}),
\end{align*}
and the distance between $C_1, C_2\in \mathcal{C}$ is
\begin{align*}
d(C_1, C_2) &= \inf\{\lVert \textbf{x}_1 - \textbf{x}_2\rVert: \textbf{x}_1\in C_1,\textbf{x}_2\in C_2  \}.
\end{align*}

In our method, we redefine the embedding of an entity by aggregating the embeddings of its associated  relational prototype entity and itself. For the triple $(h, r, t)$, the aggregation operation $g(\cdot)$ is
\begin{align*}
    \widehat{\textbf{h}} = g(\textbf{h}, \headpfun(r)) = \lambda \textbf{h} + (1-\lambda) \headpfun(r), 
\end{align*}
where $\widehat{\textbf{h}}$ denotes the redefined embedding of $\textbf{h}$, and $0 < \lambda \le 1$ is the hyperparameter. It is clear that the distance between  $\widehat{\textbf{h}}$ and $\headpfun(r)$ becomes smaller than that between $\textbf{h}$ and $\headpfun(r)$, i.e.,
\begin{align}\label{eq_scale}
\lVert \widehat{\textbf{h}} - \headpfun(r) \rVert = \lambda \lVert \textbf{h} - \headpfun(r)\rVert \le \lVert \textbf{h} - \headpfun(r) \rVert, 
\end{align}
and the equal sign holds only when $\textbf{h}=\headpfun(r)$ or $\lambda=1$.

The radii of the head and tail relational prototype areas after aggregation are
\begin{align*}
    &\widehat{\headradius}(r) = {\rm max}\{\lambda\|\textbf{h}-\headpfun(r)\|: (h,r,t)\in \mathcal{T}\} = \lambda \headradius(r),\\
        &\widehat{\tailradius}(r) = {\rm max}\{\lambda\|\textbf{t}-\tailpfun(r)\|: (h,r,t)\in \mathcal{T}\} = \lambda \tailradius(r) .
\end{align*}
Thus, we can conclude that every relational prototype area shrinks after the aggregation operation when $0 < \lambda<1$. If the value of $\lambda$ is small enough, there will be no overlap between any two different relational prototype areas in $\mathcal{C}$.

Next, we will prove that such property benefits the performance on the KG completion task. We begin with the assumption:
 $$f_r(\headpfun(r),\tailpfun(r))=-\lVert\headpfun(r) \circ \textbf{r} - \tailpfun(r)\rVert = 0 \Rightarrow \headpfun(r) \circ \textbf{r} = \tailpfun(r).$$

\begin{lemma}\label{lemma_score_function}
For a relation $r$, the following inequalities hold 
\begin{align*}
    f_r(\textbf{h},\textbf{t}) &\ge -\lVert \textbf{h} -\headpfun(r) \rVert - \lVert\textbf{t}-\tailpfun(r) \rVert,\\
    f_r(\textbf{h},\textbf{t}) &\le  \lVert\textbf{t}-\tailpfun(r) \rVert - \lVert \textbf{h} -\headpfun(r) \rVert,\\
    f_r(\textbf{h}, \textbf{t}) &\le \lVert \textbf{h} -\headpfun(r) \rVert - \lVert\textbf{t}-\tailpfun(r) \rVert.
\end{align*}
\end{lemma}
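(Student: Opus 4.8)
The plan is to prove all three inequalities by combining the reverse triangle inequality with the key assumption $\headpfun(r)\circ\textbf{r}=\tailpfun(r)$ and the fact that the Hadamard product with $\textbf{r}$ is an isometry (since each $[\textbf{r}]_i$ has modulus $1$, multiplication by $\textbf{r}$ preserves the Euclidean norm: $\lVert \textbf{a}\circ\textbf{r}\rVert = \lVert\textbf{a}\rVert$). The central identity I would exploit is the rewriting
\begin{align*}
\textbf{h}\circ\textbf{r}-\textbf{t} = (\textbf{h}-\headpfun(r))\circ\textbf{r} + (\headpfun(r)\circ\textbf{r}-\tailpfun(r)) + (\tailpfun(r)-\textbf{t}),
\end{align*}
where the middle term vanishes by the assumption, leaving $\textbf{h}\circ\textbf{r}-\textbf{t} = (\textbf{h}-\headpfun(r))\circ\textbf{r} - (\textbf{t}-\tailpfun(r))$.

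For the first inequality, I would apply the ordinary triangle inequality to this decomposition: $\lVert \textbf{h}\circ\textbf{r}-\textbf{t}\rVert \le \lVert(\textbf{h}-\headpfun(r))\circ\textbf{r}\rVert + \lVert\textbf{t}-\tailpfun(r)\rVert = \lVert\textbf{h}-\headpfun(r)\rVert + \lVert\textbf{t}-\tailpfun(r)\rVert$, using the isometry in the last step; negating gives $f_r(\textbf{h},\textbf{t}) = -\lVert\textbf{h}\circ\textbf{r}-\textbf{t}\rVert \ge -\lVert\textbf{h}-\headpfun(r)\rVert - \lVert\textbf{t}-\tailpfun(r)\rVert$. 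For the second and third inequalities, I would instead apply the reverse triangle inequality $\lVert\textbf{a}+\textbf{b}\rVert \ge \bigl|\,\lVert\textbf{a}\rVert - \lVert\textbf{b}\rVert\,\bigr|$ to the same two terms $\textbf{a}=(\textbf{h}-\headpfun(r))\circ\textbf{r}$ and $\textbf{b}=-(\textbf{t}-\tailpfun(r))$, which yields $\lVert\textbf{h}\circ\textbf{r}-\textbf{t}\rVert \ge \bigl|\,\lVert\textbf{h}-\headpfun(r)\rVert - \lVert\textbf{t}-\tailpfun(r)\rVert\,\bigr|$; dropping the absolute value in each of the two possible directions and negating gives exactly the two upper bounds on $f_r(\textbf{h},\textbf{t})$.

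The only subtlety — and the step I would be most careful about — is justifying that Hadamard multiplication by $\textbf{r}$ is norm-preserving, since the paper works in $\mathbb{C}^k$ and $\lVert\cdot\rVert$ denotes the Euclidean (complex) norm; here $\lVert\textbf{a}\circ\textbf{r}\rVert^2 = \sum_i |[\textbf{a}]_i|^2\,|[\textbf{r}]_i|^2 = \sum_i |[\textbf{a}]_i|^2 = \lVert\textbf{a}\rVert^2$ because $|[\textbf{r}]_i|=1$, which is exactly the modulus condition stated after Equation~\eqref{rotate function}. Everything else is a one-line application of (reverse) triangle inequality, so there is no real obstacle beyond making that isometry observation explicit and invoking the stated assumption $\headpfun(r)\circ\textbf{r}=\tailpfun(r)$ at the right moment.
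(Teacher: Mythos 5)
Your proposal is correct and follows essentially the same route as the paper's proof: the same decomposition $\textbf{h}\circ\textbf{r}-\textbf{t}=(\textbf{h}-\headpfun(r))\circ\textbf{r}-(\textbf{t}-\tailpfun(r))$ via the assumption $\headpfun(r)\circ\textbf{r}=\tailpfun(r)$, the same isometry observation $\lVert\textbf{a}\circ\textbf{r}\rVert=\lVert\textbf{a}\rVert$, and the triangle inequality. The only difference is that the paper proves the first inequality explicitly and dismisses the other two as ``similar,'' whereas you spell them out via the reverse triangle inequality --- a welcome bit of extra explicitness, but not a different argument.
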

\begin{proof}
Based on the assumption $\headpfun(r) \circ \textbf{r} = \tailpfun(r)$, we can get
\begin{align*}
    f_r(\textbf{h}, \textbf{t}) &= -\lVert \textbf{h}\circ \textbf{r} - \textbf{t}\rVert = -\lVert (\textbf{h}-\headpfun(r))\circ \textbf{r}-(\textbf{t}-\tailpfun(r)) \rVert \\
    &\ge -\lVert (\textbf{h} -\headpfun(r))\circ \textbf{r} \rVert - \lVert\textbf{t}-\tailpfun(r) \rVert. 
\end{align*}
As the modulus of $[\textbf{r}]_i$ is 1, we have
\begin{align*}
    \lVert (\textbf{h} -\headpfun(r))\circ \textbf{r} \rVert = \lVert \textbf{h} -\headpfun(r)\rVert.
\end{align*}
Then, we can get
$$
f_r(\textbf{h},\textbf{t}) \ge -\lVert \textbf{h} -\headpfun(r) \rVert - \lVert\textbf{t}-\tailpfun(r) \rVert.
$$
We can prove the other two inequalities similarly. Thus, the lemma is proved.
\end{proof}

Based on the conclusion that there will be no overlap between any two different relational prototype areas in $\mathcal{C}$ if $\lambda$ is small enough, we can get the following two theorems. Notice that, for convenience, $\textbf{h}$ and $\textbf{t}$ used in the following theorems represent the entity embeddings after the aggregation operation.  

\begin{theorem}\label{theorem_kgc_1}
For a relation $r$, if $d(C, \headarea(r)) > 2 \tailradius(r)$ for any $C\in \mathcal{C}\setminus\{\headarea(r)\}$, we have $f_r(\textbf{h}_1, \textbf{t}) > f_r(\textbf{h}_2, \textbf{t}) $ for any $\textbf{h}_1\in \headarea(r), \textbf{h}_2\notin \headarea(r)$, $\textbf{t}\in\tailarea(r)$.
\end{theorem}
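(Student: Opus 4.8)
The plan is to bound $f_r(\textbf{h}_1,\textbf{t})$ from below and $f_r(\textbf{h}_2,\textbf{t})$ from above using Lemma~\ref{lemma_score_function}, and then show that the lower bound strictly exceeds the upper bound under the hypothesis $d(C,\headarea(r)) > 2\tailradius(r)$. First I would apply the first inequality of Lemma~\ref{lemma_score_function} to get
\begin{align*}
f_r(\textbf{h}_1,\textbf{t}) \ge -\lVert \textbf{h}_1 - \headpfun(r)\rVert - \lVert \textbf{t} - \tailpfun(r)\rVert.
\end{align*}
Since $\textbf{h}_1 \in \headarea(r)$ we have $\lVert \textbf{h}_1 - \headpfun(r)\rVert \le \headradius(r)$ (using the post-aggregation radii, so really $\le \widehat{\headradius}(r)$), and since $\textbf{t}\in\tailarea(r)$ we have $\lVert \textbf{t} - \tailpfun(r)\rVert \le \tailradius(r)$; hence $f_r(\textbf{h}_1,\textbf{t}) \ge -\headradius(r) - \tailradius(r)$.

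Next I would use the third inequality of Lemma~\ref{lemma_score_function}, $f_r(\textbf{h}_2,\textbf{t}) \le \lVert \textbf{h}_2 - \headpfun(r)\rVert - \lVert \textbf{t} - \tailpfun(r)\rVert \le \lVert \textbf{h}_2 - \headpfun(r)\rVert$. The key geometric step is to lower-bound $\lVert \textbf{h}_2 - \headpfun(r)\rVert$ for $\textbf{h}_2 \notin \headarea(r)$. Since $\textbf{h}_2$ is an entity embedding (after aggregation), it lies in some relational prototype area $C \in \mathcal{C}$, and since $\textbf{h}_2 \notin \headarea(r)$ that area is $C \in \mathcal{C}\setminus\{\headarea(r)\}$. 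Writing $\textbf{p}$ for the center of $\headarea(r)$, any point of $C$ is within distance $\le \tailradius(r)$ of $C$'s own center (more precisely, within $C$'s radius, which is at most $\tailradius(r)$ under the stated hypothesis's scale — this is the place to be a little careful about which radius bounds which), and by definition of $d(\cdot,\cdot)$ the distance from such a point to $\headarea(r)$ is at least $d(C,\headarea(r)) - \tailradius(r) > 2\tailradius(r) - \tailradius(r) = \tailradius(r)$. Adding the radius $\headradius(r)$ of $\headarea(r)$ to pass from the set $\headarea(r)$ to its center $\textbf{p}$ gives $\lVert \textbf{h}_2 - \headpfun(r)\rVert > \tailradius(r) + \headradius(r)$. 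Actually the cleaner route is: $d(\{\textbf{h}_2\},\headarea(r)) \ge d(C,\headarea(r)) - (\text{radius of } C) > 2\tailradius(r) - \tailradius(r) = \tailradius(r)$, and $\lVert \textbf{h}_2 - \headpfun(r)\rVert \ge d(\{\textbf{h}_2\},\headarea(r)) + \headradius(r)$, so $\lVert \textbf{h}_2 - \headpfun(r)\rVert > \tailradius(r) + \headradius(r)$.

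Combining, $f_r(\textbf{h}_2,\textbf{t}) \le \lVert \textbf{h}_2 - \headpfun(r)\rVert$ — wait, this bound goes the wrong way for a strict separation, so instead I would keep the $-\lVert\textbf{t}-\tailpfun(r)\rVert$ term: we need $f_r(\textbf{h}_2,\textbf{t})$ to be provably $< -\headradius(r)-\tailradius(r)$, which forces using that $\lVert\textbf{h}_2-\headpfun(r)\rVert$ is large and negated. Re-examining Lemma~\ref{lemma_score_function}, the usable bound is the second one, $f_r(\textbf{h},\textbf{t}) \le \lVert\textbf{t}-\tailpfun(r)\rVert - \lVert\textbf{h}-\headpfun(r)\rVert \le \tailradius(r) - \lVert\textbf{h}_2-\headpfun(r)\rVert < \tailradius(r) - (\tailradius(r)+\headradius(r)) = -\headradius(r)$. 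Hmm, that gives $f_r(\textbf{h}_2,\textbf{t}) < -\headradius(r)$, but the lower bound for $f_r(\textbf{h}_1,\textbf{t})$ is $-\headradius(r)-\tailradius(r)$, which is smaller, so this is still not enough — I need a sharper lower bound for $f_r(\textbf{h}_1,\textbf{t})$ using the $2\tailradius(r)$ slack more aggressively, or the hypothesis must be read as giving enough room. The main obstacle, and the step to get exactly right, is this bookkeeping of which radius ($\headradius(r)$ vs. $\tailradius(r)$, pre- vs. post-aggregation) appears in each of the three Lemma~\ref{lemma_score_function} bounds and in the geometric separation estimate, so that the chain $f_r(\textbf{h}_1,\textbf{t}) \ge (\text{lower bound}) > (\text{upper bound}) \ge f_r(\textbf{h}_2,\textbf{t})$ closes with a strict inequality; I expect the condition $d(C,\headarea(r)) > 2\tailradius(r)$ is precisely calibrated (the factor $2$) so that $\lVert\textbf{h}_2-\headpfun(r)\rVert > 2\tailradius(r) + \headradius(r)$ when measured from the center, making $\tailradius(r) - \lVert\textbf{h}_2-\headpfun(r)\rVert < -\tailradius(r)-\headradius(r) \le f_r(\textbf{h}_1,\textbf{t})$ with equality impossible.
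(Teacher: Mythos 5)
Your overall skeleton is the same as the paper's: lower-bound $f_r(\textbf{h}_1,\textbf{t})$ by $-\headradius(r)-\tailradius(r)$ via the first inequality of Lemma~\ref{lemma_score_function}, upper-bound $f_r(\textbf{h}_2,\textbf{t})$ via the second inequality, and separate the two using the hypothesis $d(C,\headarea(r))>2\tailradius(r)$. But there is a genuine gap in your geometric step, and it is exactly the step you flag as needing care. You estimate $d(\{\textbf{h}_2\},\headarea(r)) \ge d(C_{h2},\headarea(r)) - (\text{radius of }C_{h2})$, which treats $d(C_{h2},\headarea(r))$ as if it were measured from the center of $C_{h2}$. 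It is not: by the paper's definition, $d(C_1,C_2)=\inf\{\lVert \textbf{x}_1-\textbf{x}_2\rVert:\textbf{x}_1\in C_1,\textbf{x}_2\in C_2\}$ is an infimum over \emph{both} sets, so for the particular point $\textbf{h}_2\in C_{h2}$ one has directly $\inf\{\lVert\textbf{h}_2-\textbf{x}\rVert:\textbf{x}\in\headarea(r)\}\ge d(C_{h2},\headarea(r))>2\tailradius(r)$, with no radius subtracted. Your subtraction needlessly gives away a full $\tailradius(r)$ (and also relies on the unjustified claim that the radius of $C_{h2}$ is at most $\tailradius(r)$), which is precisely why your chain fails to close and you end by merely conjecturing the bound $\lVert\textbf{h}_2-\headpfun(r)\rVert>2\tailradius(r)+\headradius(r)$ rather than proving it.

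The correct two-line derivation, which is what the paper does, is: since $\textbf{h}_2\notin\headarea(r)$ there is some $C_{h2}\in\mathcal{C}\setminus\{\headarea(r)\}$ with $\textbf{h}_2\in C_{h2}$; any segment from $\textbf{h}_2$ to the center $\headpfun(r)$ must traverse the ball $\headarea(r)$, so
\begin{align*}
\lVert\textbf{h}_2-\headpfun(r)\rVert \;\ge\; d(C_{h2},\headarea(r))+\headradius(r) \;>\; 2\tailradius(r)+\headradius(r).
\end{align*}
Feeding this into the second inequality of Lemma~\ref{lemma_score_function} gives $f_r(\textbf{h}_2,\textbf{t})\le\tailradius(r)-\lVert\textbf{h}_2-\headpfun(r)\rVert<-\headradius(r)-\tailradius(r)\le f_r(\textbf{h}_1,\textbf{t})$, closing the strict inequality. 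So your diagnosis of where the factor $2$ must be spent is right, but the proof as written does not establish it; fix the point-to-set distance estimate and the rest of your argument goes through unchanged.
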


\begin{proof}
As $\textbf{h}_2\notin C_H(r)$, there exists a $C_{h2}\in \mathcal{C}\setminus\{\headarea(r)\}$ such that $\textbf{h}_2\in C_{h2}$. Then, we have 
\begin{align*}
    \lVert\textbf{h}_2 - \headpfun(r) \rVert &\ge d(C_{h2}, \headarea(r)) + \headradius(r) > \headradius(r) + 2\tailradius(r).
\end{align*}
Based on Lemma \ref{lemma_score_function}, we can get
\begin{align*}
    f_r(\textbf{h}_1, \textbf{t}) &\ge -\lVert \textbf{h}_1 -\headpfun(r) \rVert - \lVert\textbf{t}-\tailpfun(r) \rVert \ge -\headradius(r) - \tailradius(r),\\
    f_r(\textbf{h}_2, \textbf{t}) &\le \lVert\textbf{t}-\tailpfun(r) \rVert - \lVert \textbf{h}_2 -\headpfun(r) \rVert < -\headradius(r) - \tailradius(r).
\end{align*}
Therefore, we have $f_r(\textbf{h}_1, \textbf{t}) > f_r(\textbf{h}_2, \textbf{t})$, which completes the proof.
\end{proof}

\begin{theorem}\label{theorem_kgc_2}
For a relation $r$, if $d(C, \tailarea(r)) > 2 \headradius(r)$ for any $C\in \mathcal{C}\setminus\{\tailarea(r)\}$, we have $f_r(\textbf{h}, \textbf{t}_1) > f_r(\textbf{h}, \textbf{t}_2) $ for any $\textbf{h}\in \headarea(r), \textbf{t}_1\in \tailarea(r), \textbf{t}_2\notin\tailarea(r)$.
\end{theorem}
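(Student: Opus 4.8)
The plan is to mirror the proof of Theorem~\ref{theorem_kgc_1}, interchanging the roles of the head and tail entities; the only structural change is that we invoke the first and \emph{third} inequalities of Lemma~\ref{lemma_score_function} rather than the first and second. First I would locate the stray point: since $\textbf{t}_2\notin\tailarea(r)$ and every aggregated entity embedding lies in some member of $\mathcal{C}$, there is a region $C_{t2}\in\mathcal{C}\setminus\{\tailarea(r)\}$ with $\textbf{t}_2\in C_{t2}$. Because $\tailarea(r)$ is the ball of radius $\tailradius(r)$ centred at $\tailpfun(r)$, the definition of $d(\cdot,\cdot)$ combined with the separation hypothesis gives
\begin{align*}
\lVert\textbf{t}_2-\tailpfun(r)\rVert \;\ge\; d(C_{t2},\tailarea(r))+\tailradius(r) \;>\; 2\headradius(r)+\tailradius(r).
\end{align*}

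Next I would bound the two score values. Since $\textbf{h}\in\headarea(r)$ and $\textbf{t}_1\in\tailarea(r)$ we have $\lVert\textbf{h}-\headpfun(r)\rVert\le\headradius(r)$ and $\lVert\textbf{t}_1-\tailpfun(r)\rVert\le\tailradius(r)$, so the first inequality of Lemma~\ref{lemma_score_function} yields $f_r(\textbf{h},\textbf{t}_1)\ge-\headradius(r)-\tailradius(r)$. For $\textbf{t}_2$, the third inequality of the lemma together with $\lVert\textbf{h}-\headpfun(r)\rVert\le\headradius(r)$ and the bound just derived gives
\begin{align*}
f_r(\textbf{h},\textbf{t}_2) \;\le\; \lVert\textbf{h}-\headpfun(r)\rVert - \lVert\textbf{t}_2-\tailpfun(r)\rVert \;<\; \headradius(r)-\bigl(2\headradius(r)+\tailradius(r)\bigr) \;=\; -\headradius(r)-\tailradius(r).
\end{align*}
Chaining the two bounds gives $f_r(\textbf{h},\textbf{t}_1)\ge-\headradius(r)-\tailradius(r)>f_r(\textbf{h},\textbf{t}_2)$, which is the claim.

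I do not expect a genuine obstacle: the argument is perfectly symmetric to Theorem~\ref{theorem_kgc_1} and each step is forced once the correct inequality of Lemma~\ref{lemma_score_function} is picked. The one point deserving care is the implicit covering fact used in the first step---that an aggregated embedding lying outside $\tailarea(r)$ still lies inside some other relational prototype area in $\mathcal{C}$---which is exactly where the preceding shrinking-radius discussion (each area shrinks to radius $\lambda\tailradius(r)$ and the areas become pairwise disjoint for small $\lambda$) is used; I would make sure this is stated cleanly before appealing to it.
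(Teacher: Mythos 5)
Your proof is correct and is exactly the symmetric argument the paper intends when it omits the proof of Theorem~\ref{theorem_kgc_2} as ``similar to Theorem~\ref{theorem_kgc_1}'': you swap the roles of head and tail, use the separation hypothesis to lower-bound $\lVert\textbf{t}_2-\tailpfun(r)\rVert$, and correctly switch from the second to the third inequality of Lemma~\ref{lemma_score_function}. No gaps; your closing remark about the covering fact (that an embedding outside $\tailarea(r)$ lies in some other area of $\mathcal{C}$) is the same implicit assumption the paper's own Theorem~\ref{theorem_kgc_1} proof makes.
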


\begin{proof}
We omit the proof of Theorem \ref{theorem_kgc_2} since it is similar to the proof of Theorem \ref{theorem_kgc_1}. 
\end{proof}

From Theorem \ref{theorem_kgc_1} and Theorem \ref{theorem_kgc_2}, we can see that the score of entities belonging to the category of the correct answer will be higher than entities from other categories. 

In conclusion, our method can enforce the entities’ embeddings close to their associated prototypes’  embeddings, so as to encourage the returned entities with high ranking to all belong to the category of the correct answer.
For example, when predicting the director of a film, our method will encourage the score of entities belonging to the category of ``director'' to be higher than other entities, which is beneficial to improving the performance.

\begin{table}[ht]
  \caption{Statistics of the number of relations, entities, and triples for DBP15k and DWY100k datasets.}
  \label{tab: datasets of entity alignment}
  \begin{tabular}{lccc}
    \toprule
    Datastes&\#Relation&\#Entity&\#Triple\\
    \midrule
    ${\rm DBP_{ZH}}$ & 2,830& 66,469& 153,929\\
    ${\rm DBP_{EN}}$ & 2,317& 98,125& 237,674\\
    \midrule
    ${\rm DBP_{JA}}$ & 2,043 &65,744 &164,373\\
    ${\rm DBP_{EN}}$ & 2,096 &95,680 &233,319\\
    \midrule
    ${\rm DBP_{FR}}$ & 1,379 &66,858 &192,191\\
    ${\rm DBP_{EN}}$ & 2,209 &105,889 &278,590\\
    \midrule
    ${\rm DWY_{DB}}$ & 330 &100,000 &463,294\\
    ${\rm DWY_{WD}}$ & 220 &100,000 &448,774\\
    \midrule
    ${\rm DWY_{DB}}$ & 302 &100,000 &428,952\\
    ${\rm DWY_{YG}}$ & 31 &100,000 &502,563\\
  \bottomrule
\end{tabular}
\end{table}

\section{Experiments and Results}\label{experiments}
In this section, we evaluate the proposed methods on two representative KG embedding tasks: entity alignment and KG completion. For each task, we conduct experiments on several commonly used real-world datasets, and report the results compared with several state-of-the-art methods.

\begin{table*}[ht]
    \caption{Entity alignment results on DBP15k and DWY100k. Results of MTransE, JAPE, AlignE, GCN-Align, and MuGNN are taken from \cite{mugnn}. Other results are taken from \cite{AliNet}. The best results are in bold, and the second best results are underlined.}.
    \centering
    \resizebox{2.1\columnwidth}!{\begin{tabular}{lc c c c   c c c c  c c c c ccc }
        \toprule
          Methods&\multicolumn{3}{c}{\textbf{${\rm DBP_{ZH-EN}}$}}&  \multicolumn{3}{c}{\textbf{${\rm DBP_{JA-EN}}$}} & \multicolumn{3}{c}{\textbf{${\rm DBP_{FR-EN}}$}} &
          \multicolumn{3}{c}{\textbf{${\rm DBP-WD}$}} &
          \multicolumn{3}{c}{\textbf{${\rm DBP-YG}$}}\\
         \cmidrule(lr){2-4}
         \cmidrule(lr){5-7}
         \cmidrule(lr){8-10}
         \cmidrule(lr){11-13}
         \cmidrule(lr){14-16}
         & H@1 & H@10 & MRR & H@1 & H@10 & MRR  & H@1 & H@10 & MRR & H@1 & H@10& MRR & H@1 & H@10& MRR\\
        \midrule
        MTransE \cite{mtranse}& 0.308 &0.614 &0.364 &0.279 &0.575 &0.349 &0.244 &0.556 &0.335 &0.281 &0.520 &0.363 &0.252 &0.493 &0.334\\
        IPTransE \cite{iptranse}& 0.406 &0.735 &0.516 &0.367 &0.693 &0.474 &0.333 &0.685 &0.451 &0.349 &0.638 &0.447 &0.297 &0.558 &0.386\\
        JAPE \cite{sun2017cross}& 0.412 &0.745 &0.490 &0.363 &0.685 &0.476 &0.324 &0.667 &0.430 &0.318 &0.589 &0.411 &0.236 &0.484 &0.320\\
        AlignE \cite{sun2018bootstrapping}&0.472 &0.792 &0.581 &0.448 &0.789 &0.563 &0.481 &0.824 &0.599 &0.566 &0.827 &0.655 &0.633 &0.848 &0.707\\
        GCN-Align \cite{gcn-align}& 0.413 &0.744 &0.549 &0.399 &0.745 &0.546 &0.373 &0.745 &0.532 &0.506 &0.772 &0.600 &0.597 &0.838 &0.682\\
        SEA \cite{sea}& 0.424 &0.796 &0.548 &0.385 &0.783 &0.518 &0.400 &0.797 &0.533 &0.518 &0.802 &0.616 &0.516 &0.736 &0.592\\
        RSN \cite{rsn}& 0.508 &0.745 &0.591 &0.507 &0.737 &0.590 &0.516 &0.768 &0.605 &0.607 &0.793 &0.673 &0.689 &0.878 &0.756\\
        MuGNN \cite{mugnn}& 0.494 &\underline{0.844} &0.611 &0.501 &\underline{0.857} &0.621 &0.495 &\underline{0.870} &0.621 &0.616 &0.897 &0.714 &0.741 &0.937 &0.810\\
        AliNet \cite{AliNet}&\underline{0.539} &0.826 &\underline{0.628} &\underline{0.549} &0.831 &\underline{0.645} &\underline{0.552} &0.852 &\underline{0.657} &\textbf{0.690} &\underline{0.908} &\textbf{0.766} &\underline{0.786} &\underline{0.943} &\underline{0.841}\\
        \midrule
        GCN \cite{gcn}& 0.477&0.828&0.593&0.495&0.848&0.613&0.498&0.861&0.619&0.611    &0.891    &0.707    &0.747    &0.939    &0.814\\
        RPE-GCN&\textbf{0.576}&\textbf{0.878}    &\textbf{0.678}    &\textbf{0.582}    &\textbf{0.883}    &\textbf{0.685}    &\textbf{0.597}    &\textbf{0.899}    &\textbf{0.701}&\underline{0.678}    &\textbf{0.915}    &\underline{0.762}    &\textbf{0.797}    &\textbf{0.957}    &\textbf{0.854}\\
        \bottomrule
    \end{tabular}}
    \label{results:entity alignment}
\end{table*}

\begin{table*}[ht]
    \label{table: dbi}
    \caption{KG completion results on WN18RR, FB15k-237, and YAGO3-10. Results of TransE and RotatE are taken from \cite{convkb} and \cite{rotate}, respectively. Other results are taken from \cite{conve}. The best results are in bold.}
    \centering
    \begin{tabular}{l  c c c c  c c c c  c c c c }
        \toprule
          Methods&\multicolumn{4}{c}{{WN18RR}}&  \multicolumn{4}{c}{{FB15k-237}} & \multicolumn{4}{c}{{YAGO3-10}}\\
         \cmidrule(lr){2-5}
         \cmidrule(lr){6-9}
         \cmidrule(lr){10-13}
         & MRR & H@1 & H@3 & H@10 & MRR & H@1 & H@3 & H@10 & MRR & H@1 & H@3 & H@10 \\
        \midrule
        TransE \cite{transe} & 0.226 &   -  &   -  & 0.501 & 0.294 &   -  &   -  & 0.465 & - & - & - & -\\
        DistMult \cite{distmult} & 0.43 & 0.39 & 0.44 & 0.49 & 0.241 & 0.155 & 0.263 & 0.419 & 0.34 & 0.24 & 0.38 & 0.54 \\
        ConvE \cite{conve}& 0.43  & 0.40 & 0.44 & 0.52 & 0.325 & 0.237 & 0.356 & 0.501 & 0.44  & 0.35  & 0.49  & 0.62\\
        ComplEx \cite{complex}& 0.44  & 0.41  & 0.46  & 0.51  & 0.247 & 0.158 & 0.275 & 0.428 & 0.36 & 0.26 & 0.40 & 0.55\\
        \midrule
        RotatE \cite{rotate}& 0.476 & 0.428 & 0.492 & 0.571 & 0.338 & 0.241 & 0.375 & 0.533 & 0.495 & 0.402 & 0.550 & 0.670\\
        RPE-RotatE &\textbf{0.501	}&\textbf{0.457}&\textbf{	0.515}&\textbf{	0.585 }&\textbf{0.355	}&\textbf{0.261	}&\textbf{0.390}&\textbf{	0.546 }&\textbf{0.546}&\textbf{	0.460}&\textbf{	0.598}&\textbf{	0.703}\\
        \bottomrule
    \end{tabular}
    \label{results:link prediction}
\end{table*}

\subsection{Entity Alignment Results}
\subsubsection{\textbf{Entity alignment datasets}} 

We conduct the entity alignment experiments on the DBP15k \cite{sun2017cross} and DWY100k \cite{sun2018bootstrapping} datasets following the latest progress \cite{sun2017cross,sun2018bootstrapping,mugnn,AliNet}. DBP15k contains three datasets built
from multi-lingual DBpedia: ${\rm DBP_{ZH-EN}}$ (Chinese to English), ${\rm DBP_{JA-EN}}$ (Japanese to English), and ${\rm DBP_{FR-EN}}$ (French to English). All the above datasets include 15,000 entity pairs as seed alignments. DWY100k consists of two large-scale cross-resource datasets: DBP-WD (DBpedia to Wikidata) and DBP-YG (DBpedia to YAGO3). Each dataset has 100,000 entity pairs as seed alignments. Following the previous work \cite{mugnn,AliNet}, we split 30\% of the entity seed alignments as training data, and leave the remaining data for the test. We list the statistics of the two datasets in Table \ref{tab: datasets of entity alignment}.

\subsubsection{\textbf{Implementation Details}} 
In the experiments, the hyperparameters of GCN (baseline) are taken from \cite{mugnn}. To make a fair comparison, we set the hyperparameters of RPE-GCN and the baseline model to the same. We set the embedding size to 128. We stack two layers (L=2) of GCN. We set $\gamma=1.0$ and $\lambda=0.5$. More details about the hyperparameters are listed in the Appendix \ref{hyperparameters}.

By convention \cite{gcn-align,mugnn,AliNet}, we choose Mean Reciprocal Rank (MRR) and Hits at N (H@N) as the evaluation metrics. MRR is the average of the reciprocal of the rank results. H@N indicates the percentage of the targets that have been correctly ranked in top N. Higher MRR or H@N indicates better performance.

\subsubsection{\textbf{Main Results}} 
In the experiments, we choose several state-of-the-art embedding-based entity alignment methods for comparison: MTransE \cite{mtranse}, IPTransE \cite{iptranse}, JAPE \cite{sun2017cross}, AlignE \cite{sun2018bootstrapping}, GCN-Align\cite{gcn-align}, SEA \cite{sea}, RSN \cite{rsn}, MuGNN \cite{mugnn}, AliNet \cite{AliNet}, and the baseline GCN \cite{gcn}. As claimed by \cite{AliNet}, the recent GNN-based models like GMNN \cite{gmnn} and RDGCN \cite{rdgcn} incorporate the surface information of entities into the embeddings. As our model mainly relies on structure and semantic information in the KG, we do not take these models into comparison following \cite{AliNet}.

We present the entity alignment results in Table \ref{results:entity alignment}. 
 It is evident that the proposed RPE-GCN achieves much better results than the baseline GCN. Specifically, on the ${\rm DBP15k}$ datasets, RPE-GCN gains about 10\% higher H@1, 4\% higher H@10, and 0.08 higher MRR than the baseline. Moreover, on the DWY100k datasets, RPE-GCN gains about 5\% higher H@1, 2\% higher H@10, and 0.05 higher MRR than the baseline. The results demonstrate the effectiveness of our proposed relational prototype entities.
 

Also, it is worth mentioning that the proposed RPE-GCN shows larger superiority on the DBP15k datasets against the recent AliNet \cite{AliNet}. On the DBP15k datasets, RPE-GCN can achieve a gain of about 5\% by H@1 and H@10, and 0.05 by MRR. However, on the DWY100k datasets, we find that RPE-GCN outperforms AliNet with an absolute margin of 0.01 by all the three metrics on DBP-YG, and obtains comparable performance to AliNet on DBP-WD. We know that the DWY100k datasets are extracted from different KGs, and the schema heterogeneity of them is much heavier than that of the DBP15k datasets
which are extracted from the multi-lingual DBpedia \cite{AliNet}. We argue that the results may be caused by that AliNet is sophisticatedly designed for mitigating the non-isomorphism of neighborhood structures for entity alignment. Therefore, AliNet is particularly suitable for the DWY100k datasets. Recall that, our proposed RPE-GCN aims at exploiting global semantic similarities in the KG, and we leave the problem of the non-isomorphism of neighborhood structures in the KG for future work.

\begin{figure*}[ht]
    \centering{
                \subfigure[The embedding visualization of RotatE (baseline)]{ \label{visualization: rotate}
            \includegraphics[height=0.8\columnwidth]{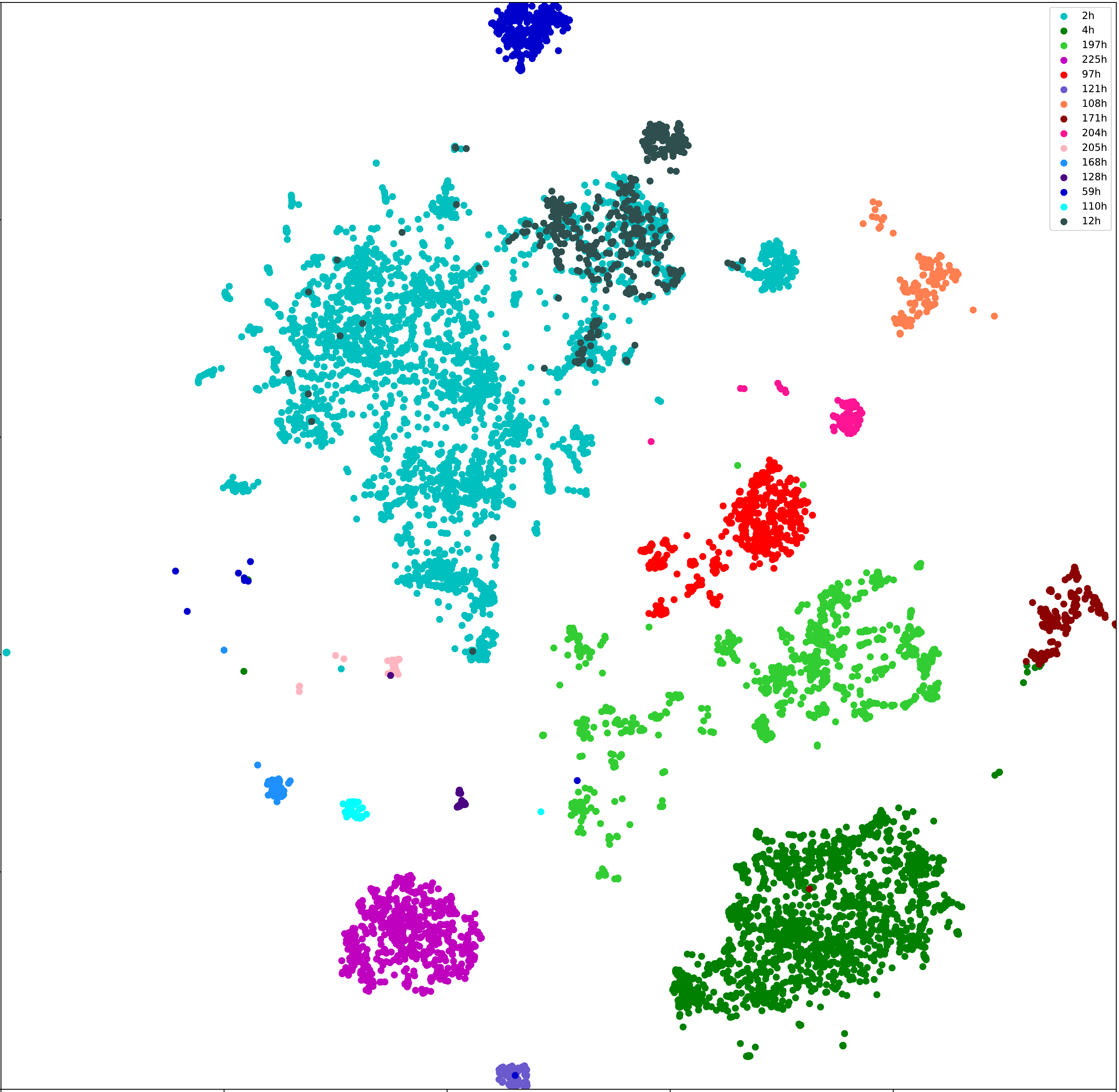}
        }\hspace{14mm}
        \subfigure[The embedding visualization of RPE-RotatE]{ \label{visualization: RPE-RotatE}
            \includegraphics[height=0.8\columnwidth]{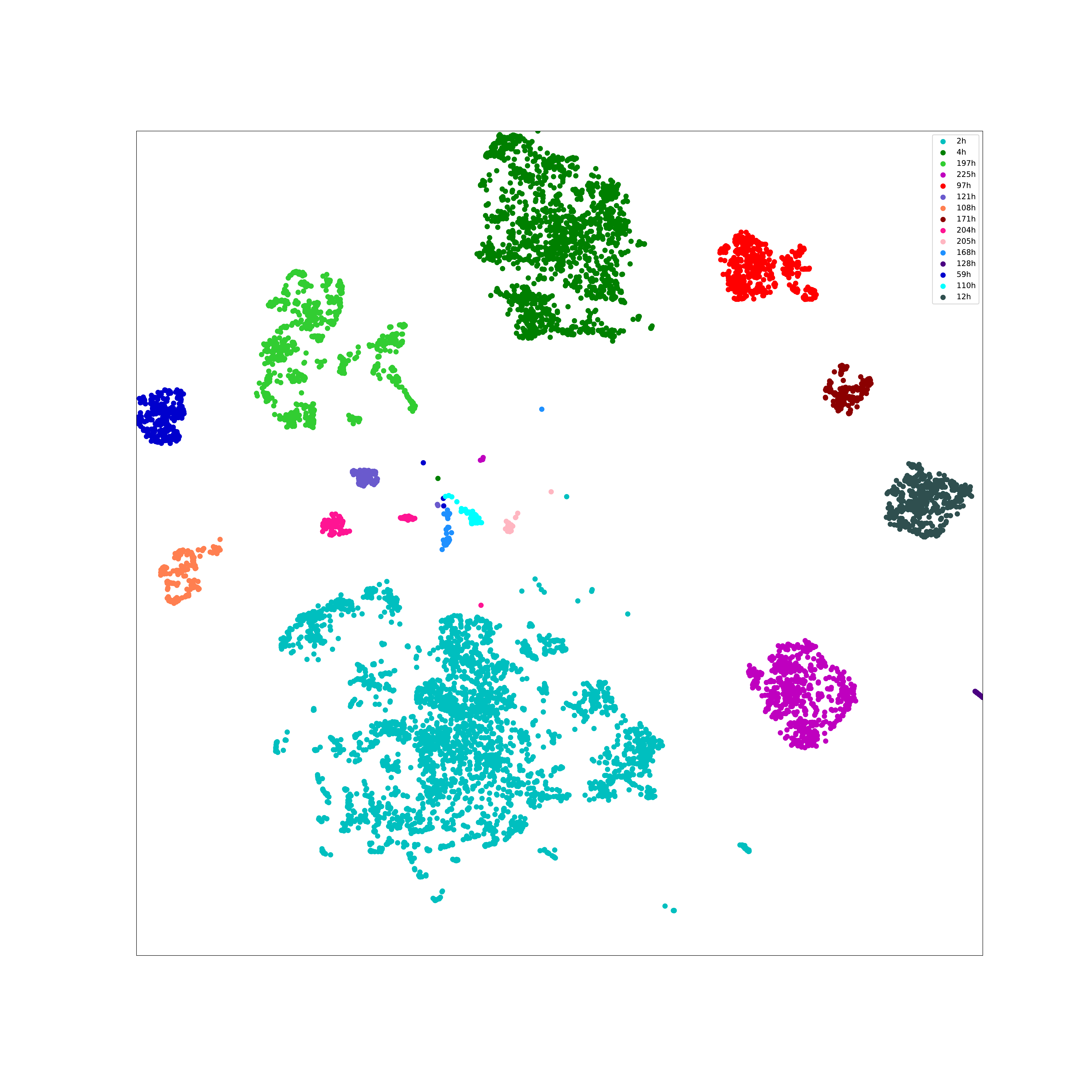}
        }
    }
    \caption{The embedding visualization of entities on the FB15k-237 dataset. Figure~\ref{visualization: rotate} and Figure~\ref{visualization: RPE-RotatE} show the entity embedding visualization of baseline and our method, respectively. Different colors represent different categories.}
    \label{visualization:kg completion}
\end{figure*}

\begin{table*}[ht]
  \caption{The Davies-Bouldin Index (DBI) performance of the baselines and our methods.}
  \label{DBI}
  \begin{tabular}{lcc|cc|cc|cc|cc|c c c}
    \toprule
    &\multicolumn{6}{c}{\textbf{${\rm DBP15k}$}}&  \multicolumn{4}{c}{\textbf{${\rm DWY100k}$}}& WN18RR&FB15k-237&YAGO3-10\\
         \cmidrule(lr){2-7}
         \cmidrule(lr){8-11}
    &ZH&EN&JA&EN&FR&EN&DBP&WD&DBP&YG\\
    \midrule
    Baseline&10.70 &12.51 &13.41 &15.18 &17.52 &15.75  &33.46   &39.53 &25.43 &65.98 &15.28 &38.86 &30.23 \\
    Our method&\textbf{6.83} &\textbf{7.86} &\textbf{7.84} &\textbf{8.78} &\textbf{10.52} &\textbf{9.86} &\textbf{9.54} &\textbf{8.20}  &\textbf{9.22} &\textbf{16.18} & \textbf{4.49} & \textbf{3.06} &\textbf{6.22} \\
  \bottomrule
\end{tabular}
\end{table*}

\begin{table}[ht]
  \caption{Statistics of the number of relations, entities, and triples in each split for WN18RR, FB15k-237 and YAGO3-10.}
  \resizebox{1.01\columnwidth}!{\begin{tabular}{lccccc}
    \toprule
Datastes&\#Relation&\#Entity&\#Training&\#Validation&\#Test\\
    \midrule
    WN18RR&11&40,943 &86,835 &3,034 &3,134\\
    FB15k-237&237 &14,541 &272,115 &17,535 &20,466\\
    YAGO3-10&37&123,182 &1,079,040 &5,000 &5,000\\
  \bottomrule
\end{tabular}}
 \label{datasets_link}
\end{table}

\subsection{KG Completion Results}
\subsubsection{\textbf{KG completion datasets.}} We evaluate our proposed models on the KG completion task on three commonly used datasets: WN18RR \cite{wn18rr}, FB15k-237 \cite{conve}, and YAGO3-10 \cite{yago3}. WN18RR, FB15k-237, and YAGO3-10 are subsets of WN18 \citep{transe}, FB15k \citep{transe}, and YAGO3 \citep{yago3}, respectively.
As pointed out by \cite{wn18rr,conve}, WN18 and FB15k suffer from the test set leakage problem through inverse relations. Therefore, we use WN18RR and FB15k-237 as the benchmarks following \cite{rotate,complex,conve}. The statistics of the datasets are summarized in Table \ref{datasets_link}.

\subsubsection{\textbf{Implementation Details}}
For comparing with the baseline RotatE fairly, the hyperparameters of RPE-RotatE are the same as those provided by the original paper \cite{rotate}. Due to lack of space, we list the hyperparameters in the Appendix \ref{hyperparameters}.

Following \cite{transe,rotate}, we evaluate the performance of KG completion task in the filtered setting; that is, we rank test triples against all other candidate triples not appearing in the training, validation, or test set, where candidates are generated by corrupting head or tail entities: $(h', r, t)$ or $(h, r, t')$. In the experiments, we also use MRR and H@N as the evaluation metrics.

\begin{figure*}[]
    \centering{
                \subfigure[${\rm DBP_{ZH-EN}}$]{ \label{task-adaptive}
            \includegraphics[height=0.43\columnwidth]{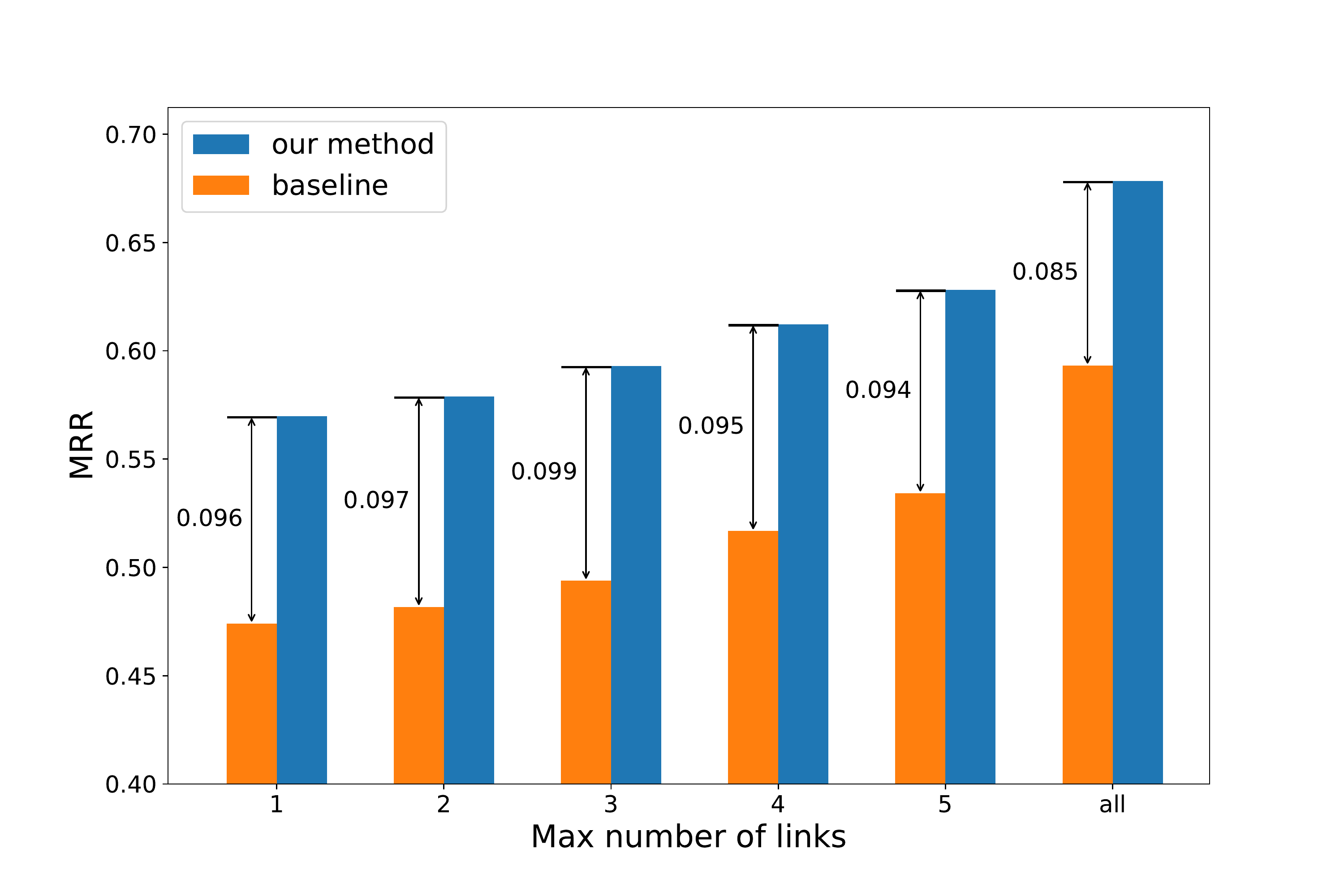}
        }\hspace{0.3mm}
        \subfigure[${\rm DBP_{JA-EN}}$]{ \label{details}
            \includegraphics[height=0.43\columnwidth]{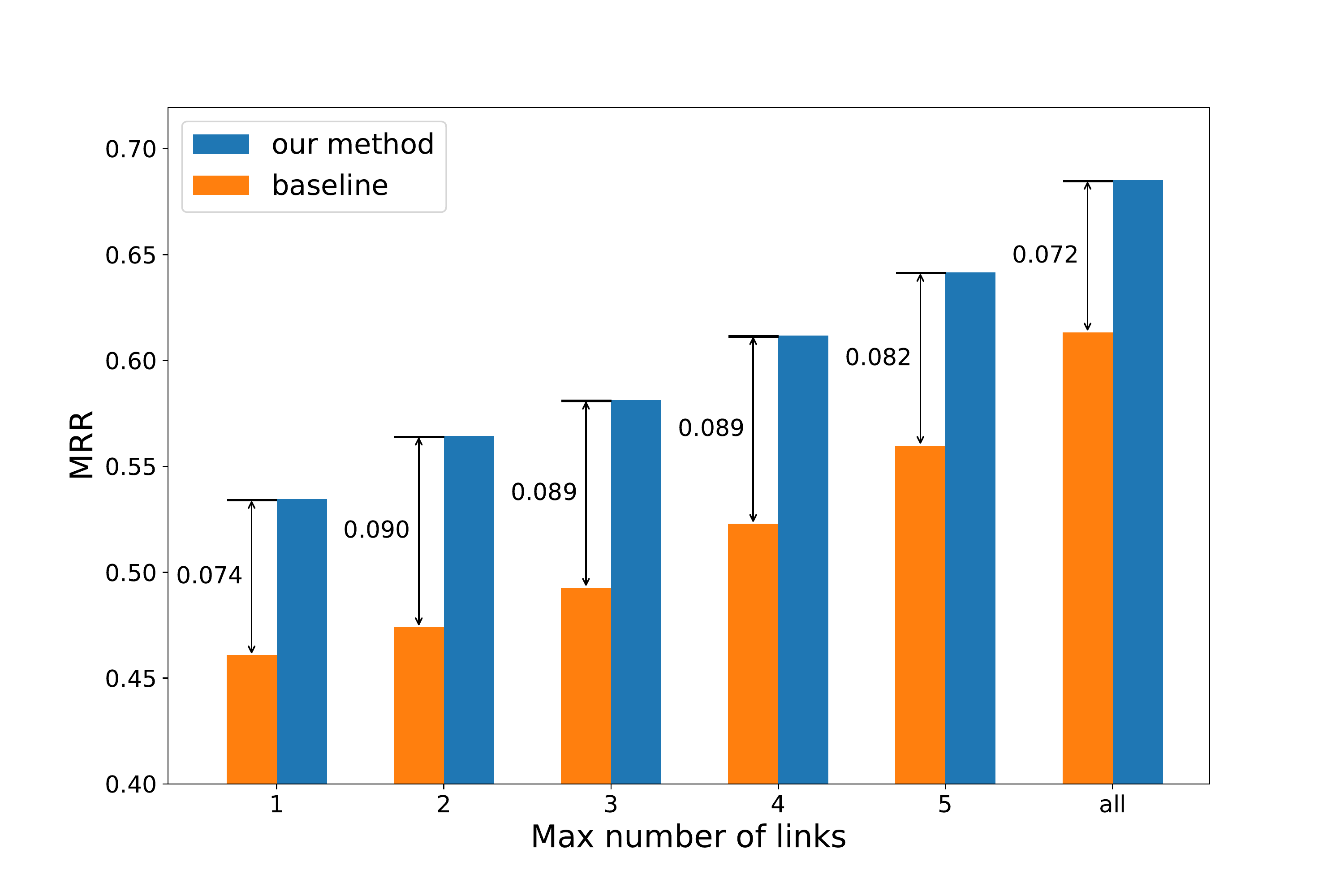}
        }\hspace{0.3mm}
        \subfigure[${\rm DBP_{FR-EN}}$]{ \label{details}
            \includegraphics[height=0.43\columnwidth]{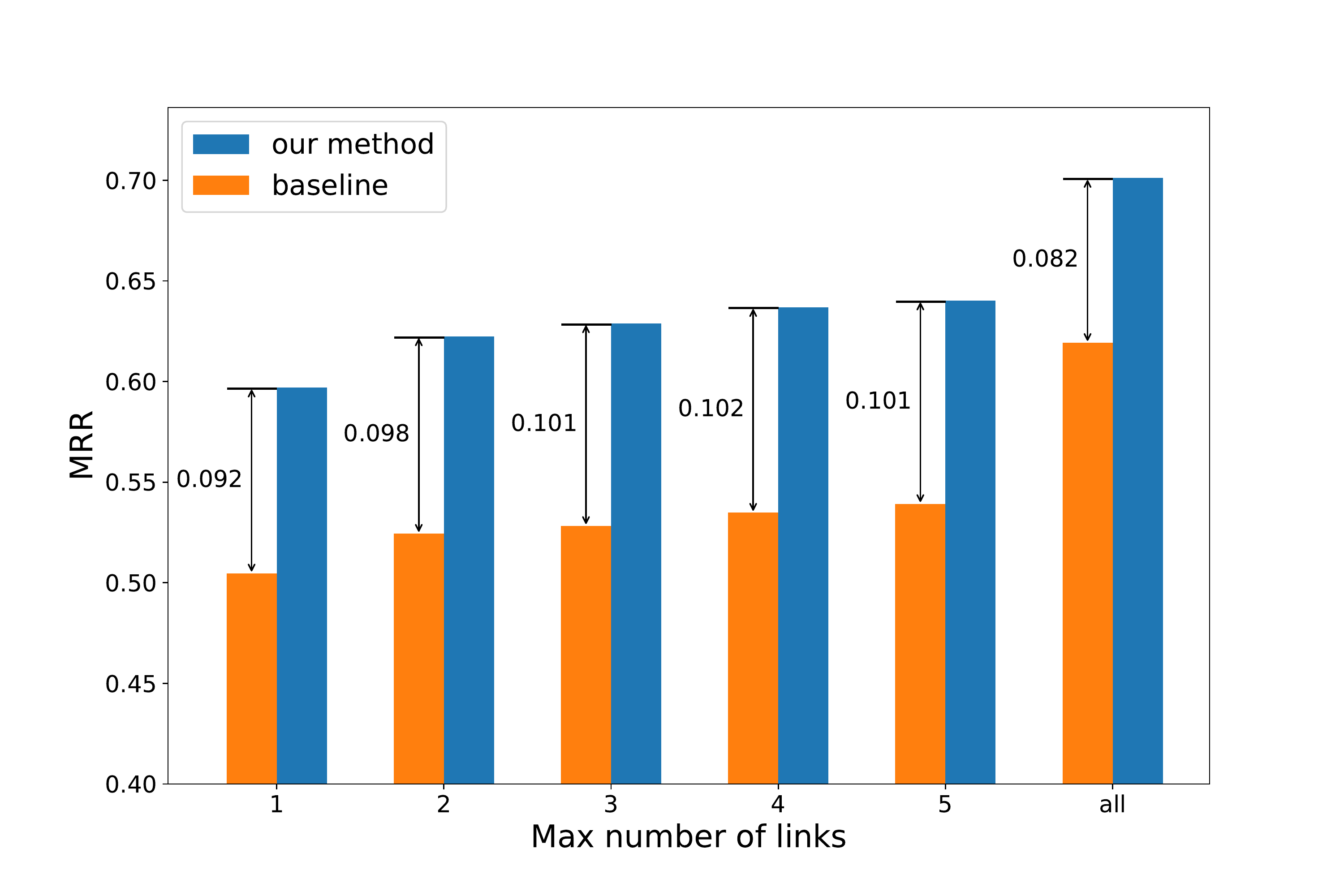}
        }
    }\\
    \centering{
                \subfigure[WN18RR]{ \label{task-adaptive}
            \includegraphics[height=0.43\columnwidth]{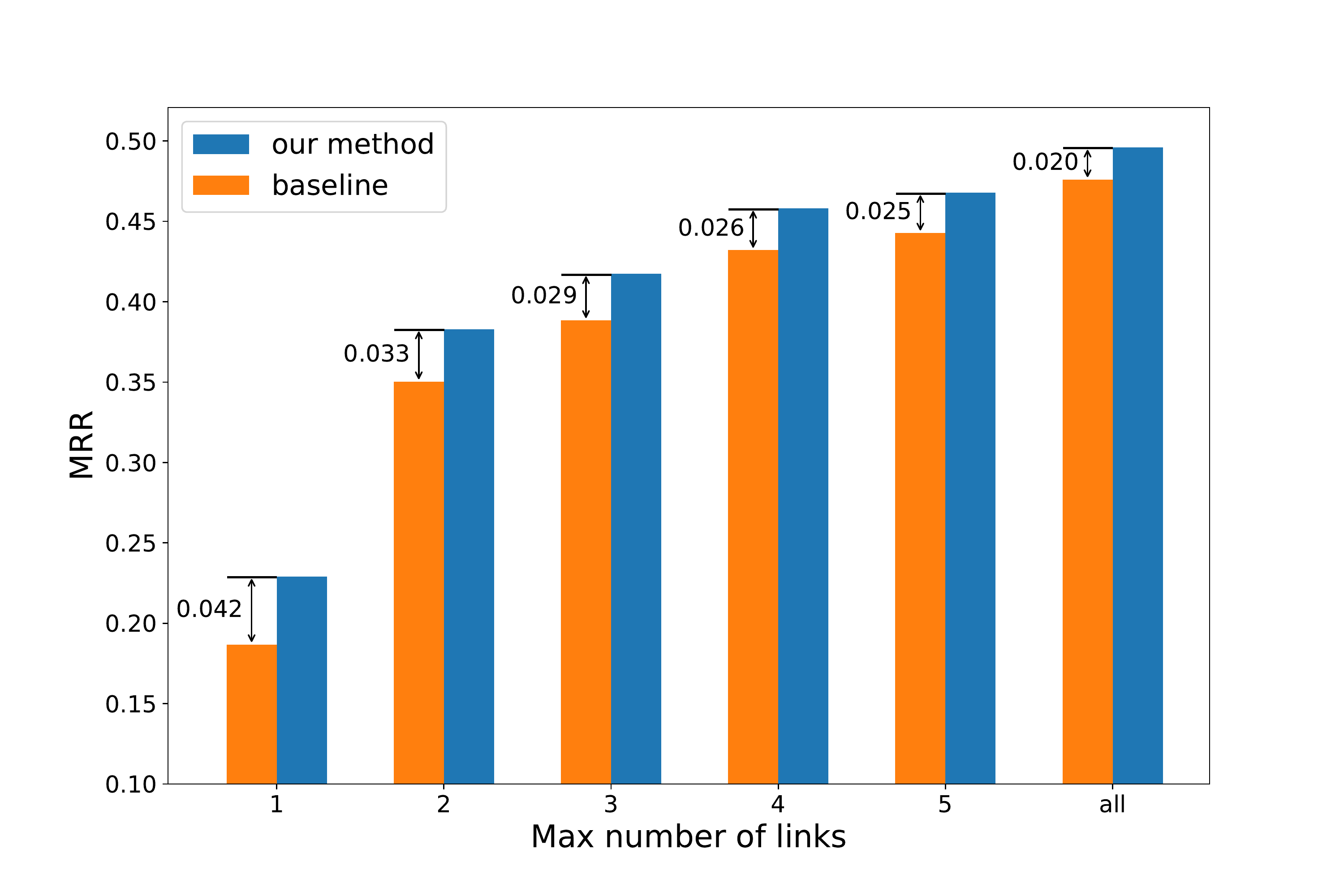}
        }
        \subfigure[FB15k-237]{ \label{details}
            \includegraphics[height=0.43\columnwidth]{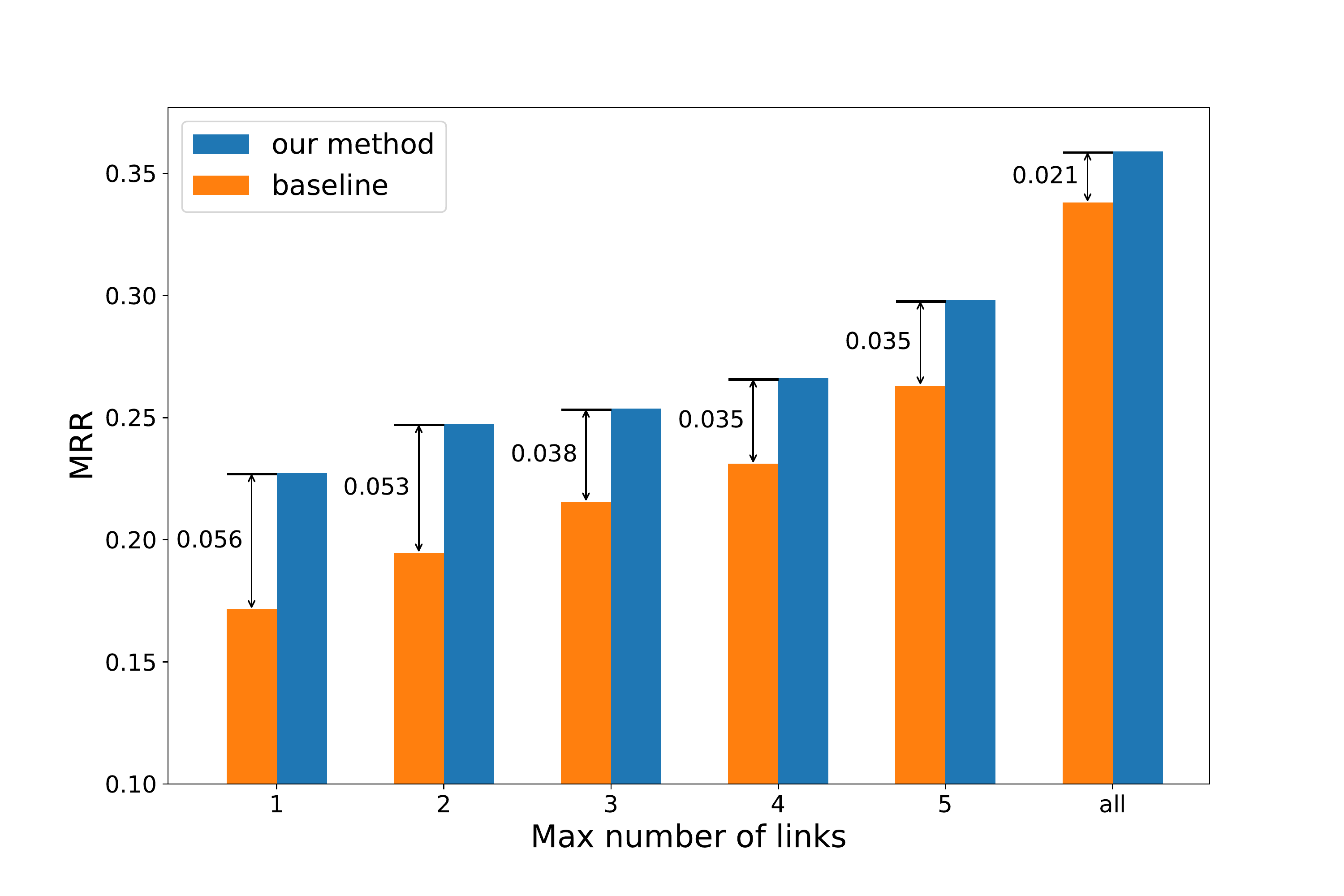}
        }
        \subfigure[YAGO3-10]{ \label{details}
            \includegraphics[height=0.43\columnwidth]{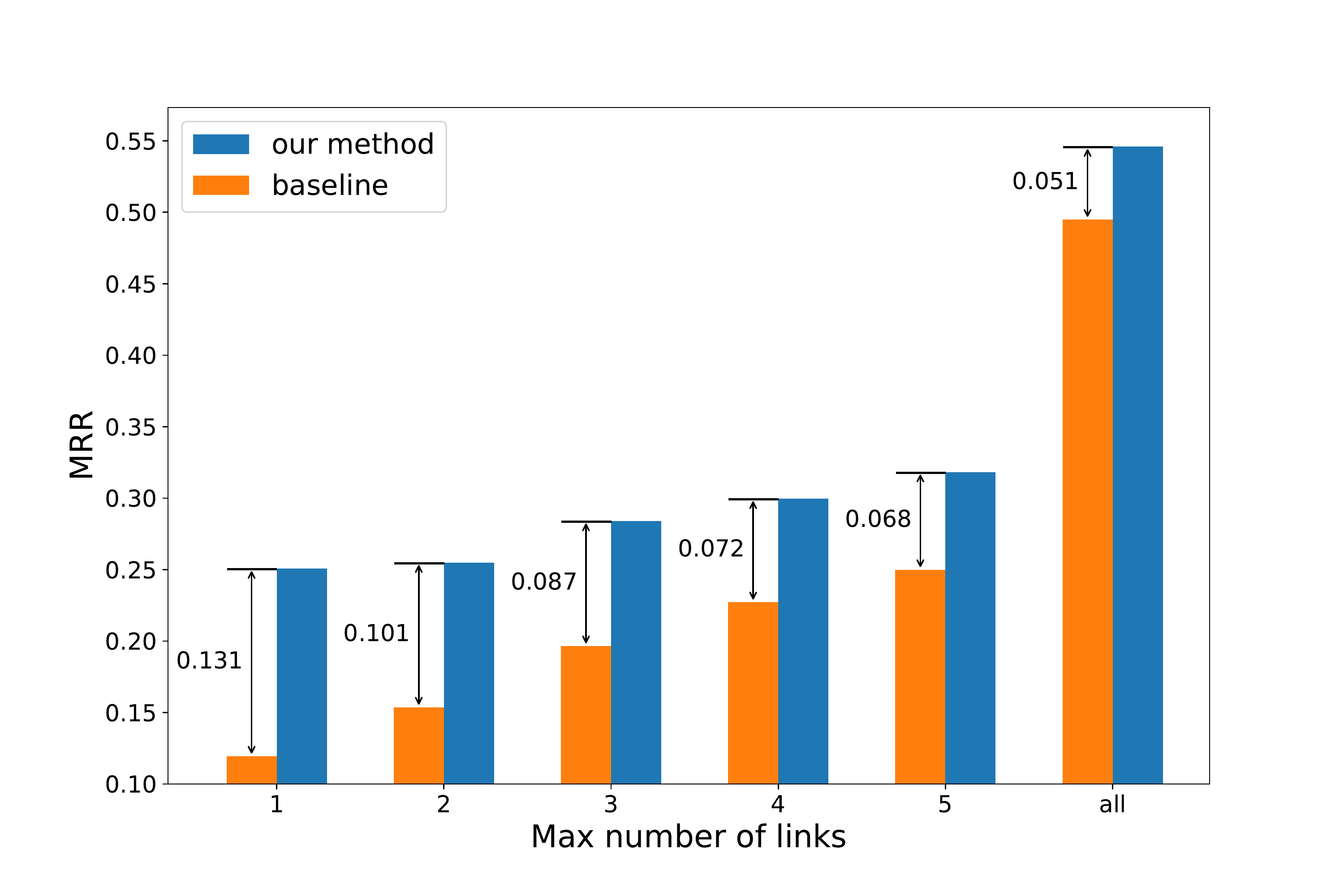}
        }
        }
    \caption{MRR results of entities w.r.t different  number of links. If the max number of links is $N$, the entities have neighbors no larger than $N$. The general trend is that the smaller the max number of links, the more significant our model will improve the performance of the baseline. }
    \label{long-tail}
\end{figure*}

\subsubsection{\textbf{Main Results}}
On the KG completion task, we compare the proposed method against several state-of-the-art methods, including TransE \cite{transe}, DistMult \cite{distmult}, ComplEx \cite{complex}, ConvE \cite{conve}, and the baseline RotatE \cite{rotate}. 

 In Table \ref{results:link prediction}, we show the performance of RPE-RotatE and several previous methods. We can see that the proposed RPE-RotatE significantly outperforms the baseline RotatE and achieves the best performance on all datasets, which demonstrates the effectiveness of our proposed method. Moreover, we can see that the improvement of RPE-RotatE on YAGO3-10 is much more significant than those on WN18RR and FB15k-237. Specifically, on the YAGO3-10 dataset, RPE-RotatE gains 0.051 higher MRR, 5.8\% higher H@1, 4.8\% higher H@3, and 3.3\% higher H@10 than RotatE. We know that the relations in the YAGO3-10 dataset have clear semantic property, e.g., \textit{directed} and \textit{hasChild}; that is, the head entities of \textit{directed} should be the ``director'', and the tail entities of it should be the ``film''. Therefore, we can expect that our proposed model is capable of working well on this dataset. WN18RR consists of relations such as \textit{hypernym} and \textit{similar to}, and such relations make the semantics of head and tail entities on this dataset less clear than YAGO3-10. FB15k-237 contains relations of composition patterns, which makes the relation types more complex than YAGO3-10.
 However, on the WN18RR and FB15k-237 datasets, our proposed RPE-RotatE can also gain better performance than RotatE, which shows that our model is widely applicable.

\section{Further Experiments} \label{futher experiments}

\subsection{The Performance of Entity Clustering}\label{performance of entity cluster}
We want to observe how our proposed method affects the embeddings of entities from the same and different categories.
Figure~\ref{visualization:kg completion} shows the embedding visualization of entities on the FB15k-237 dataset via t-SNE \cite{tsne}. Notice that, an entity may belong to more than one category, e.g., there exists $(h_1,r_1,t_1)$ and $(h_1,r_2,t_2)$; that is, $h_1$ belongs to the head categories of $r_1$ and $r_2$ at the same time. For better visualization, we choose the categories with no overlap entities. Moreover, we only show the embeddings of head categories with more than 100 entities. Figure~\ref{visualization: rotate} shows the entity embeddings of the baseline, and Figure~\ref{visualization: RPE-RotatE} shows the entity embeddings of our proposed method. Different colors represent different categories. It is readily apparent that the boundaries between different categories are more clear and the distances within categories are closer in our model, especially for the categories $110h$ and $12h$ ($12h$ represents the head category of the relation with id $12$). The visualization result demonstrates that our proposed method makes entity embeddings from different categories more distinguishable.

Moreover, we use the Davies-Bouldin Index (DBI) \cite{davies1979cluster}---a metric for evaluating clustering algorithms---to measure the performance of entity clustering of the baselines and our methods. 
DBI is a function of the ratio of the sum of within-cluster scatter to between-cluster separation. A lower DBI value means that the performance of the clustering is better. For more details about DBI, please refer to \cite{davies1979cluster}. We show the results in Table \ref{DBI}. We can see that our proposed methods achieve better performance than the baselines, which demonstrates that the proposed relational prototype entities can effectively encourage the semantic similarities of the entity embeddings from the same category.


\begin{figure}[H]
    \centering{
        \subfigure[Entity alignment task] { \label{lambda 1}
            \includegraphics[width=0.475\columnwidth]{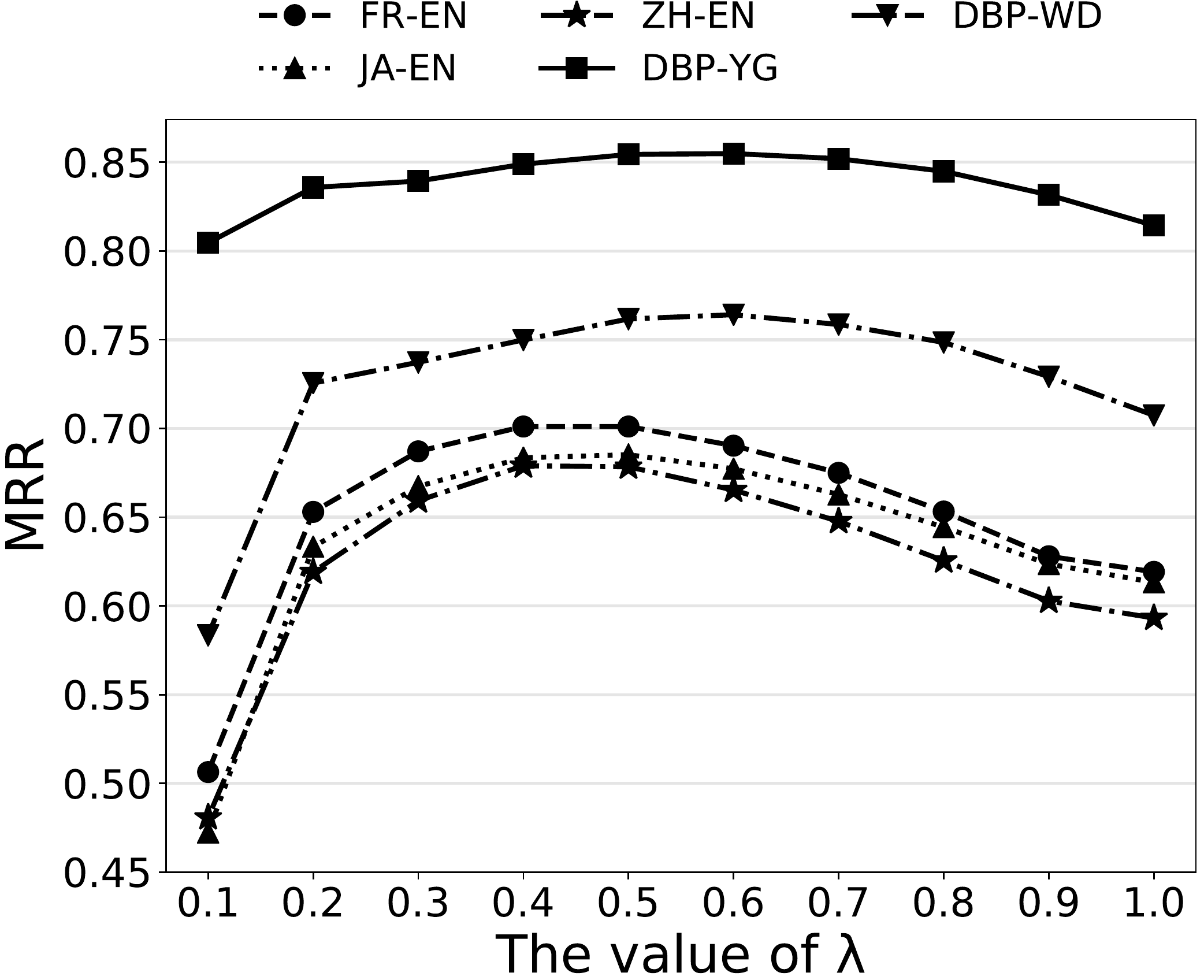}
        }
        \subfigure[KG completion task] { \label{lambda 2}
            \includegraphics[width=0.475\columnwidth]{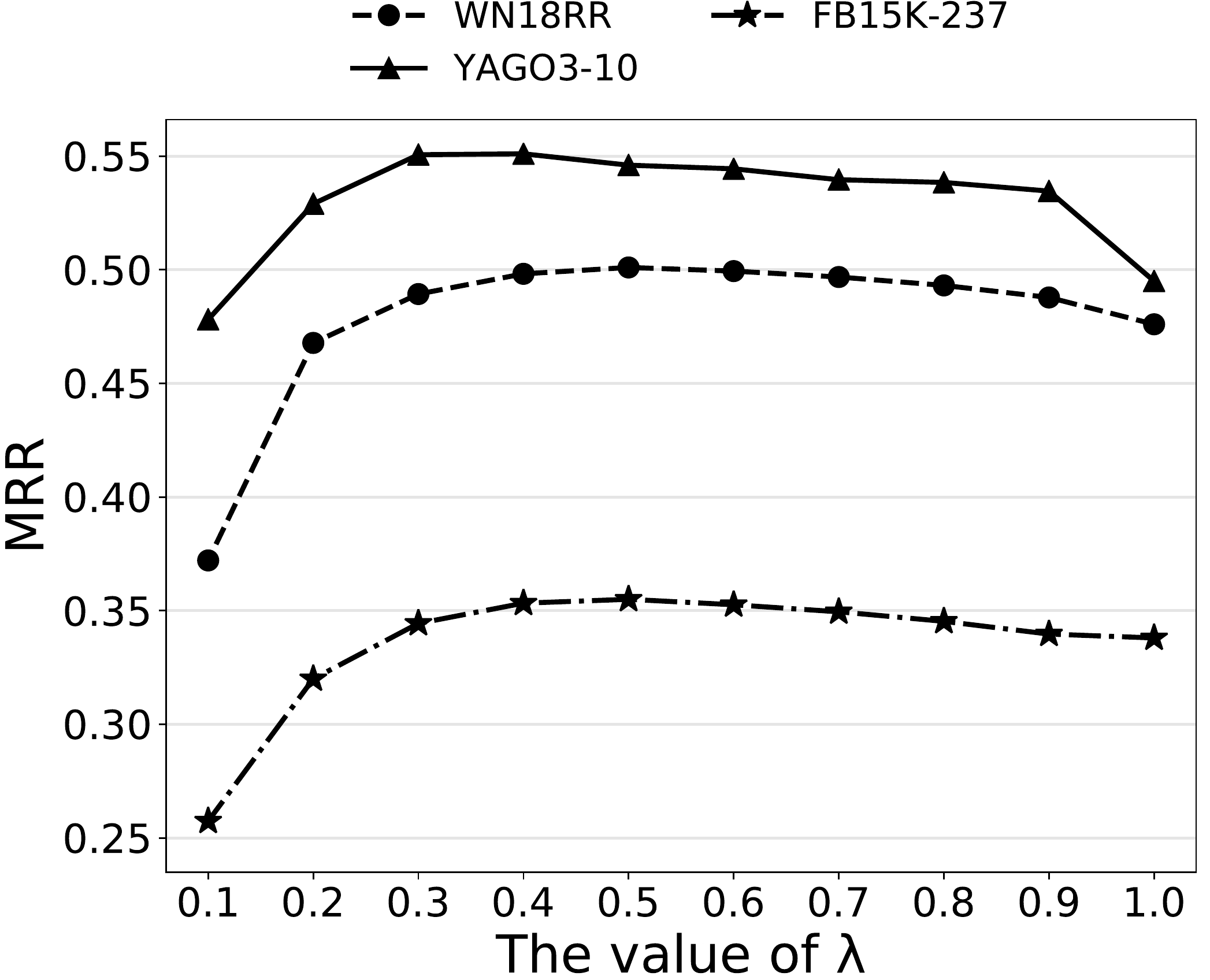}
        }
    }
    \caption{MRR results w.r.t the value of $\lambda$.}
    \label{lambda}
\end{figure}

\subsection{The Performance of Long-tail Entities}\label{performance of long-tail}
As discussed before, many existing methods focus on the local information in the KG. Under such a strategy, the long-tail entities that have few relational triples would only receive limited attention \cite{rsn}. However, our proposed method can provide rich global semantic information for these entities through relational prototype entities. Therefore, we also want to assess the effectiveness of the proposed methods on long-tail entities.

Figure~\ref{long-tail} shows the MRR results of the baselines and our methods on long-tail entities w.r.t  different number of links. Notice that, if the max number of links is $N$, it represents the entities with one-hop neighbors no larger than $N$. The \textit{`all'} represents the overall performance of the whole dataset. Due to lack of space, we only show the results of six datasets in Figure~\ref{long-tail}, and we show the results of the DBP-WD and DBP-YG datasets in the Appendix \ref{appendix c}. We can see that the general trend is that the smaller the max number of links, the more significant our model will improve the performance of the baselines. The results demonstrate the effectiveness of our methods, especially for entities with few relational triples.

\subsection{Sensitivity to The Hyperparameter $\lambda$}
In this section, we want to observe how the hyperparameter $\lambda$ affects the performance of our method. In Figure~\ref{lambda}, we show the MRR results on all the datasets w.r.t the value of $\lambda$. Recall that, when $\lambda=1$, it represents the results of the baselines (GCN and RotatE). From the results, we can see that our proposed methods outperform the baselines when $0.3\le\lambda<1$, which demonstrate the effectiveness of our approach. 
In this paper, we set $\lambda=0.5$ on the entity alignment and KG embedding tasks.

\section{Conclusion}\label{conclusion}
In this paper, we exploit global semantic similarities in the KG by relational prototype entities. By enforcing the entities’ embeddings close to their associated prototypes’ embeddings, our approach can effectively encourage the global semantic similarities of entities connected by the same relation.
Experiments on the entity alignment and KG completion tasks demonstrate that our proposed methods significantly improve the performance over the baselines, with overall performance better than recent state-of-the-art methods. 
Future work includes studying a unified architecture that can simultaneously exploit global semantic similarities and address structural heterogeneity in the knowledge graph.


\bibliographystyle{ACM-Reference-Format}
\bibliography{paper}

\newpage
\appendix

\section{Analysis on entity alignment task} \label{analysis appendix}
In this section, we analyze our proposed approach on the entity alignment task. For the entity alignment task, there exist two KGs, i.e., $\mathcal{G}_1 = (\mathcal{E}_1, \mathcal{R}_1, \mathcal{T}_1)$ and $\mathcal{G}_2 = (\mathcal{E}_2, \mathcal{R}_2, \mathcal{T}_2)$. Following the definitions in Section \ref{analysis}, we define the sets of relational prototype areas for $\mathcal{G}_1$ and $\mathcal{G}_2$ as $\mathcal{C}_1$ and $\mathcal{C}_2$, respectively. We assume that for a relational prototype area $C_{sr}$ in $\mathcal{C}_1$, its corresponding alignment relational prototype area $C_{tg}$ in $\mathcal{C}_2$ satisfies that $C_{sr}=C_{tg}$. 
Based on the loss function proposed by \eqref{loss function entity alignment}, the score function of entity alignment is 
\begin{align}\label{score function entity alignment}
    f(\textbf{e}_{sr}, \textbf{e}_{tg}) = - d(\textbf{e}_{sr}, \textbf{e}_{tg}) = -\lVert \textbf{e}_{sr} - \textbf{e}_{tg}\rVert, e_{sr}\in \mathcal{E}_1, e_{tg}\in \mathcal{E}_2.
\end{align}

Now, we want to prove that our approach can encourage the global semantic similarities of entities connected by the same relation on the entity alignment task.
However, it's hard to prove this property theoretically on the entity alignment task. The main difficulties are twofold. First, the activation function in \eqref{gcn rule new1} makes the aggregation operation in RPE-GCN be non-linear. Second, the update of an entity embedding in RPE-GCN depends on the embeddings of its neighbors (including entities and relational prototype entities). However, even entities connected by the same relation may have different neighbors, which makes their updates of embeddings diverse. Therefore, such properties make the theoretical analysis--the embeddings of entities connected by the same relation will become closer to each other---of our approach for entity alignment task difficult.





However, based on the experiments in Section \ref{performance of entity cluster}, a reasonable assumption is that the embeddings of entities connected by the same relation can become closer to each other on the entity alignment task. Then, we begin with this assumption and further get the following theorems.


\begin{theorem}\label{theorem_entity_alignment_1}
For an entity $e\in \mathcal{E}_1$, there exists a relational prototype area $C_{sr} \in \mathcal{C}_1$ such that $\textbf{e} \in C_{sr}$. Suppose that its corresponding alignment relational prototype area is $C_{tg}\in \mathcal{C}_2$, and the radius of $C_{tg}$ is $R$. If $d(C, C_{tg}) > 2 R$ for any $C \in \mathcal{C}_2 \setminus \{C_{tg}\}$, we have $f(\textbf{e}, \textbf{e}_1) > f(\textbf{e}, \textbf{e}_2)$ for any $\textbf{e}_1 \in C_{tg}, \textbf{e}_2\notin C_{tg}, e_1, e_2 \in \mathcal{E}_2$.
\end{theorem}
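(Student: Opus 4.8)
The plan is to run exactly the argument of Theorem~\ref{theorem_kgc_1}, with the RotatE score replaced by the alignment score $f$ of \eqref{score function entity alignment}, and with the two radii $\headradius(r),\tailradius(r)$ there both replaced by the single radius $R$ of the target area $C_{tg}$. Since $C_{sr}=C_{tg}$ by the standing assumption of Appendix~\ref{analysis appendix}, let $\textbf{c}$ be their common center (the relational prototype vector defining this ball); then the radius of $C_{sr}$ is also $R$, and from $\textbf{e}\in C_{sr}$ we get $\lVert\textbf{e}-\textbf{c}\rVert\le R$. I would bound $f(\textbf{e},\textbf{e}_1)$ from below, bound $f(\textbf{e},\textbf{e}_2)$ from above, and check that the two bounds are separated.

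For the lower bound, $\textbf{e}_1\in C_{tg}$ gives $\lVert\textbf{e}_1-\textbf{c}\rVert\le R$, so by the triangle inequality $\lVert\textbf{e}-\textbf{e}_1\rVert\le\lVert\textbf{e}-\textbf{c}\rVert+\lVert\textbf{c}-\textbf{e}_1\rVert\le 2R$, hence $f(\textbf{e},\textbf{e}_1)=-\lVert\textbf{e}-\textbf{e}_1\rVert\ge -2R$. For the upper bound, since $\textbf{e}_2\notin C_{tg}$, the standing assumption of this appendix---that the non-linear GCN aggregation nonetheless keeps every entity embedding inside one of its relational prototype areas---puts $\textbf{e}_2$ in some $C_{e2}\in\mathcal{C}_2\setminus\{C_{tg}\}$. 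Using the elementary identity $\lVert\textbf{x}-\textbf{c}\rVert=\mathrm{dist}(\textbf{x},C_{tg})+R$ for $\textbf{x}$ outside the ball $C_{tg}$, together with $\mathrm{dist}(\textbf{e}_2,C_{tg})\ge d(C_{e2},C_{tg})$ and the hypothesis $d(C_{e2},C_{tg})>2R$, I get $\lVert\textbf{e}_2-\textbf{c}\rVert\ge d(C_{e2},C_{tg})+R>3R$. The reverse triangle inequality then gives $\lVert\textbf{e}-\textbf{e}_2\rVert\ge\lVert\textbf{e}_2-\textbf{c}\rVert-\lVert\textbf{e}-\textbf{c}\rVert>3R-R=2R$, so $f(\textbf{e},\textbf{e}_2)=-\lVert\textbf{e}-\textbf{e}_2\rVert<-2R$.

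Combining the two bounds, $f(\textbf{e},\textbf{e}_1)\ge -2R > f(\textbf{e},\textbf{e}_2)$, which is exactly the claim. The routine parts are the triangle-inequality manipulations above, which are in the same spirit as Lemma~\ref{lemma_score_function} and which I would not spell out further. The step I expect to be the real crux is establishing that $\textbf{e}_2\notin C_{tg}$ forces $\textbf{e}_2$ into some other prototype area $C_{e2}$ on which $\lVert\textbf{e}_2-\textbf{c}\rVert$ is large: this is where one must invoke the heuristic assumption of Appendix~\ref{analysis appendix}, because---unlike the KG completion analysis of Section~\ref{analysis}, where $\widehat{R}=\lambda R$ makes the shrinkage of the prototype areas explicit---the non-linearity of the GCN update blocks a direct proof that entities stay near their prototypes; and it is where the ball geometry (distance-to-center versus distance-to-ball) is needed to convert the separation hypothesis $d(C_{e2},C_{tg})>2R$ into the required lower bound on $\lVert\textbf{e}_2-\textbf{c}\rVert$.
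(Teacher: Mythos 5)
Your argument is correct and is essentially the paper's own proof: both sides reduce to showing $\lVert\textbf{e}-\textbf{e}_1\rVert\le 2R$ (since $\textbf{e},\textbf{e}_1$ lie in the common ball $C_{sr}=C_{tg}$) and $\lVert\textbf{e}-\textbf{e}_2\rVert>2R$ (from the separation hypothesis applied to the area containing $\textbf{e}_2$), then compare via $f=-d$. The only cosmetic difference is that you route the second bound through the ball's center ($>3R-R$) while the paper uses the set distance $d(\textbf{e}_2,C_{sr})\le d(\textbf{e}_2,\textbf{e})$ directly; your explicit remark that $\textbf{e}_2$ must lie in some other prototype area is the same (implicit) step the paper takes.
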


\begin{proof}
Since $C_{sr} = C_{tg}$, for any $\textbf{e}\in C_{sr}$, $\textbf{e}_2 \notin C_{tg}$, we have 
$$d(\textbf{e}_2, C_{tg}) > 2 R \Rightarrow d(\textbf{e}_2, C_{sr})> 2 R\Rightarrow d(\textbf{e}_2, \textbf{e}) > 2 R.$$
For any $\textbf{e}_1 \in C_{tg}$, we have $\textbf{e}_1 \in C_{sr}$ since $C_{sr} = C_{tg}$. Therefore, we can get
$$ 
d(\textbf{e}_1, \textbf{e}) \le 2 R < d(\textbf{e}_2, \textbf{e}).
$$
Therefore, we can get $f(\textbf{e}, \textbf{e}_1) > f(\textbf{e}, \textbf{e}_2)$ based on \eqref{score function entity alignment}, which completes the proof.
\end{proof}

\begin{theorem}\label{theorem_entity_alignment_2}
For an entity $e\in \mathcal{E}_2$, there exists a relational prototype area $C_{tg}\in \mathcal{C}_2$ such that $\textbf{e} \in C_{tg}$. Suppose that its corresponding alignment relational prototype area is $C_{sr}\in \mathcal{C}_1$, and the radius of $C_{sr}$ is $R$. If $d(C, C_{sr}) > 2 R$ for any $C \in \mathcal{C}_1 \setminus \{C_{sr}\}$, we have $f(\textbf{e}, \textbf{e}_1) > f(\textbf{e}, \textbf{e}_2)$ for any $\textbf{e}_1 \in C_{sr}, \textbf{e}_2\notin C_{sr}, e_1, e_2 \in \mathcal{E}_1$.
\end{theorem}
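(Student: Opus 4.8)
The plan is to prove Theorem~\ref{theorem_entity_alignment_2} as the mirror image of Theorem~\ref{theorem_entity_alignment_1}, interchanging the roles of $\mathcal{G}_1$ and $\mathcal{G}_2$ (equivalently, of $\mathcal{C}_1$ and $\mathcal{C}_2$). The two ingredients I would use are the standing assumption in this appendix that a relational prototype area and its alignment counterpart coincide, i.e.\ $C_{sr}=C_{tg}$, and the ``reasonable assumption'' that entities connected by the same relation lie inside the corresponding relational prototype area; together with the hypothesis $\textbf{e}\in C_{tg}$ these yield $\textbf{e}\in C_{sr}$ as well.

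First, I would handle the negative entity $e_2\in\mathcal{E}_1$ with $\textbf{e}_2\notin C_{sr}$. Since in our construction every entity of $\mathcal{G}_1$ is connected to the head or tail relational prototype entity of at least one of its incident relations, $\textbf{e}_2$ belongs to some area $C_{e_2}\in\mathcal{C}_1\setminus\{C_{sr}\}$, so the hypothesis $d(C,C_{sr})>2R$ applied with $C=C_{e_2}$ gives
\begin{align*}
\lVert \textbf{e}_2-\textbf{e}\rVert \ge d(\textbf{e}_2, C_{sr}) \ge d(C_{e_2}, C_{sr}) > 2R,
\end{align*}
where the first inequality uses $\textbf{e}\in C_{sr}$.

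Next, for the positive entity $e_1\in\mathcal{E}_1$ with $\textbf{e}_1\in C_{sr}$, both $\textbf{e}_1$ and $\textbf{e}$ lie in the ball of radius $R$ about the prototype of $C_{sr}$, so the triangle inequality gives $\lVert\textbf{e}_1-\textbf{e}\rVert\le 2R$. Combining the two estimates yields $d(\textbf{e},\textbf{e}_1)\le 2R < d(\textbf{e},\textbf{e}_2)$, and then \eqref{score function entity alignment} immediately gives $f(\textbf{e},\textbf{e}_1)=-d(\textbf{e},\textbf{e}_1)>-d(\textbf{e},\textbf{e}_2)=f(\textbf{e},\textbf{e}_2)$, which is the claim.

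I do not expect any substantive obstacle: the argument is literally the proof of Theorem~\ref{theorem_entity_alignment_1} with the indices $1$ and $2$ swapped. The only points deserving a word of care are the justification that $\textbf{e}_2$ lies in \emph{some} area of $\mathcal{C}_1$ (so that the separation hypothesis can be invoked), which follows from the connectivity noted above, and, exactly as in Theorem~\ref{theorem_entity_alignment_1}, the use of the ``reasonable assumption'' to place $\textbf{e}$ itself inside $C_{sr}$. Given these, the proof is a short chain of triangle-inequality estimates, so I would simply state that it is analogous to that of Theorem~\ref{theorem_entity_alignment_1} and, if space permits, include the two displays above.
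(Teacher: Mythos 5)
Your proposal is correct and matches the paper's intent exactly: the paper omits this proof as ``similar to the proof of Theorem~\ref{theorem_entity_alignment_1},'' and your argument is precisely that proof with the roles of $\mathcal{C}_1$ and $\mathcal{C}_2$ interchanged, using $C_{sr}=C_{tg}$ to place $\textbf{e}$ in $C_{sr}$ and then the two triangle-inequality bounds $d(\textbf{e},\textbf{e}_1)\le 2R < d(\textbf{e},\textbf{e}_2)$. Your explicit justification that $\textbf{e}_2$ lies in some area of $\mathcal{C}_1$ (so the separation hypothesis applies) is a small but welcome addition that the paper's Theorem~\ref{theorem_entity_alignment_1} proof leaves implicit.
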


\begin{proof}
We omit the proof of Theorem \ref{theorem_entity_alignment_2} since it is similar to the proof of Theorem \ref{theorem_entity_alignment_1}. 
\end{proof}

Form Theorem \ref{theorem_entity_alignment_1} and Theorem \ref{theorem_entity_alignment_2}, we can see that the score of entities from  the category of the correct answer will be higher than entities from other categories, which benefits the performance.

\begin{table*}[ht]
    \caption{Results of ablation study on the DBP15k and DWY100k datasets, where ``w/o aggregation'' represents ``without aggregation'' proposed by \eqref{aggragation}.}
    \centering
    \resizebox{2.1\columnwidth}!{\begin{tabular}{lc c c c   c c c c  c c c c ccc }
        \toprule
          Methods&\multicolumn{3}{c}{\textbf{${\rm DBP_{ZH-EN}}$}}&  \multicolumn{3}{c}{\textbf{${\rm DBP_{JA-EN}}$}} & \multicolumn{3}{c}{\textbf{${\rm DBP_{FR-EN}}$}} &
          \multicolumn{3}{c}{\textbf{${\rm DBP-WD}$}} &
          \multicolumn{3}{c}{\textbf{${\rm DBP-YG}$}}\\
         \cmidrule(lr){2-4}
         \cmidrule(lr){5-7}
         \cmidrule(lr){8-10}
         \cmidrule(lr){11-13}
         \cmidrule(lr){14-16}
         & H@1 & H@10 & MRR & H@1 & H@10 & MRR  & H@1 & H@10 & MRR & H@1 & H@10& MRR & H@1 & H@10& MRR\\
        \midrule
        GCN (w/o aggregation) & 0.463&\textbf{0.833}&0.585&0.488&\textbf{0.852}&0.608&0.477&\textbf{0.862}&0.604&0.610    &\textbf{0.900}    &0.710    &0.745    &\textbf{0.941}    &0.813\\
        GCN & \textbf{0.477}&0.828&\textbf{0.593}&\textbf{0.495}&0.848&\textbf{0.613}&\textbf{0.498}&0.861&\textbf{0.619}&\textbf{0.611} &0.891    &\textbf{0.707}    &\textbf{0.747}    &0.939    &\textbf{0.814}\\
        \midrule
        RPE-GCN (w/o aggregation)&0.554&\textbf{0.884}    & {0.665}    & {0.561}    &\textbf{0.889}    & {0.673}    & {0.573}    &\textbf{0.906}    & {0.688}&{0.663}    &\textbf{0.923}    &{0.754}    & {0.789}    &\textbf{0.958}    & {0.849}\\
        RPE-GCN&\textbf{0.576}& {0.878}    &\textbf{0.678}    &\textbf{0.582}    & {0.883}    &\textbf{0.685}    &\textbf{0.597}    & {0.899}    &\textbf{0.701}& \textbf{0.678}    & {0.915}    & \textbf{0.762}    &\textbf{0.797}    & {0.957}    &\textbf{0.854}\\
        \bottomrule
    \end{tabular}}
    \label{ablation results}
\end{table*}

\section{More results of Long-tail Entities}\label{appendix c}
Figure~\ref{long-tail-appendix} shows the MRR results of long-tail entities on the DBP-WD and DBP-YG datasets. From the results, an interesting observation is that the MRR results of long-tail entities is better than others. However, the general trend is the same as the results in Section \ref{performance of long-tail}; that is, the smaller the max number of links, the more significant our model will improve the performance of the baselines. Recall that, if the max number of links is $N$, the entities have neighbors no larger than $N$.
\begin{figure}[H]
    \centering{
                \subfigure[DBP-WD]{ \label{task-adaptive-2}
            \includegraphics[height=0.5\columnwidth]{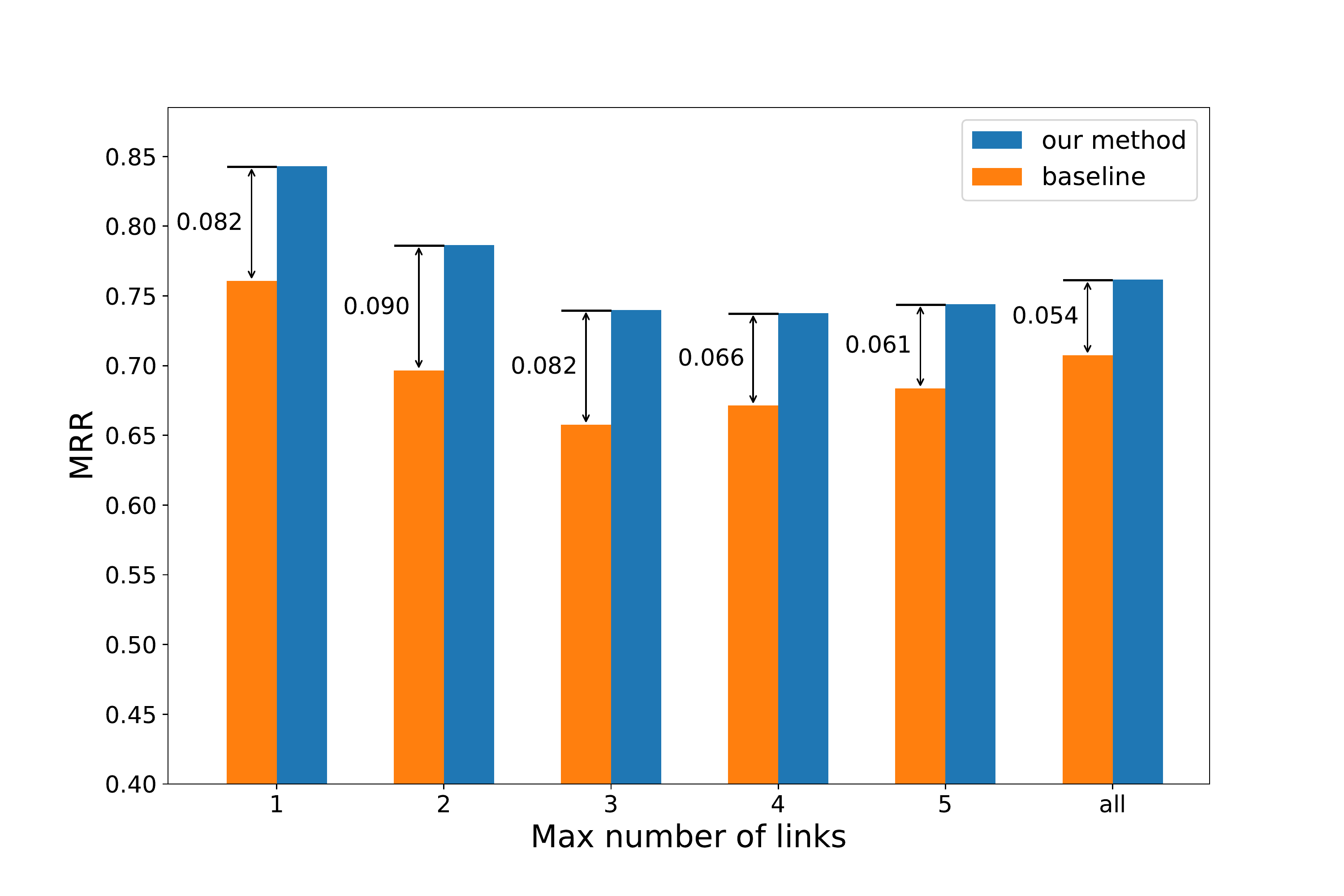}}
        \\
                \subfigure[DBP-YG]{ \label{task-adaptive-2}
            \includegraphics[height=0.5\columnwidth]{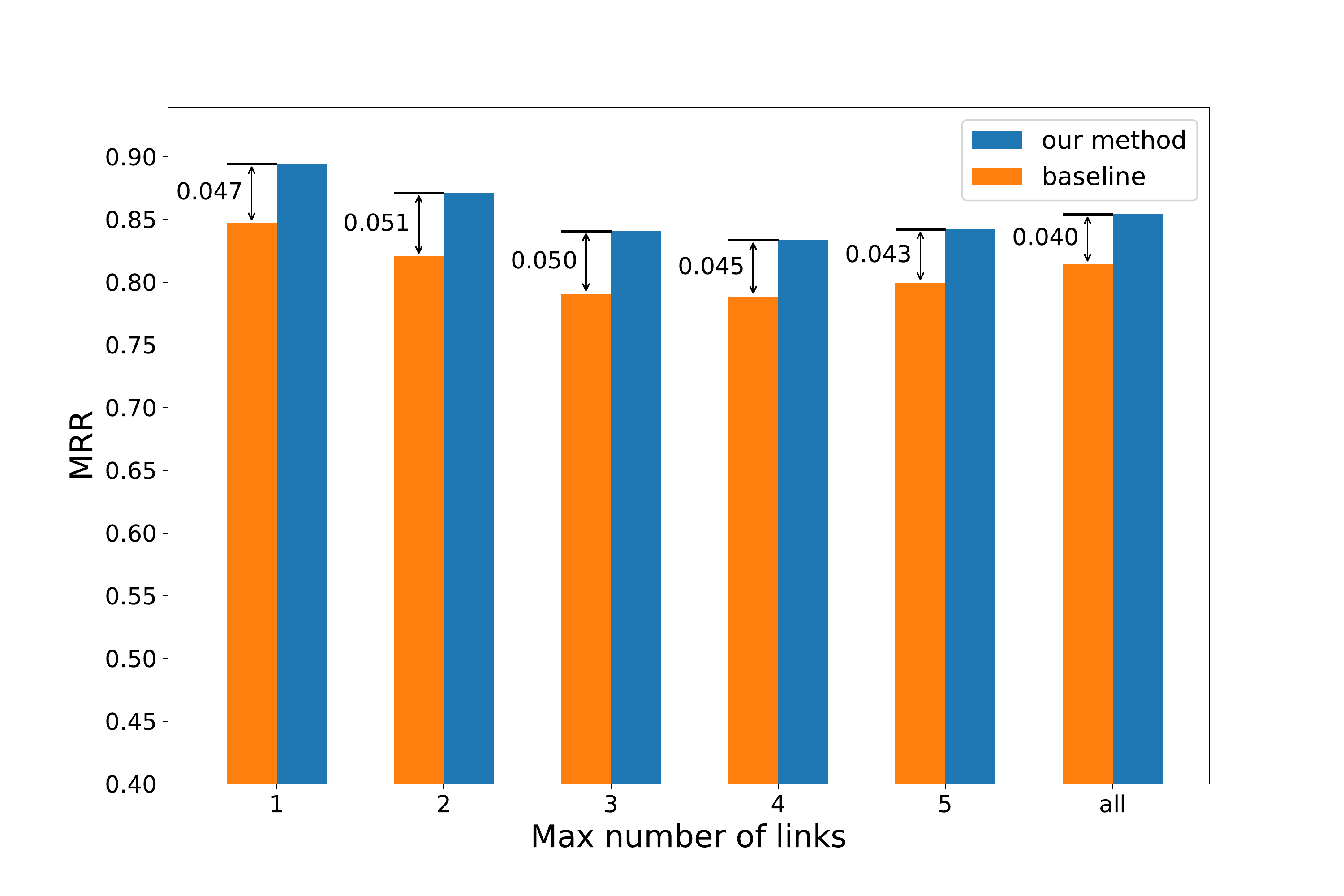}
        }}
    \caption{MRR results of entities w.r.t different number of links on DBP-WD and DBP-YG datasets.}
    \label{long-tail-appendix}
\end{figure}

\begin{table}[ht]
    \centering
    \caption{Hyperparameters of RotatE and RPE-RotatE. $k$ is the embedding size, $b$ is the batch size, $b'$ is the negative sampling size, $\gamma$ is the fixed margin in the loss function, $\alpha$ is the temperature of negative sampling, and $\lambda$ is the weight coefficient.}
\resizebox{1\columnwidth}!{\begin{tabular}{l *{7}{c}}
        \toprule
        Dataset & $k$ & $b$ & $b'$ & $\gamma$ &$\alpha$ &$\lambda$& learning rate\\
        \midrule
        WN18RR & 500 & 512 & 1024 & 6.0 &0.5 &0.5&0.00005\\
        FB15k-237 & 1000 & 1024 & 256 & 9.0 &1.0 &0.5&0.00005\\
        YAGO3-10 & 500 & 1024 & 400 & 24.0  &1.0&0.5&0.0002\\
        \bottomrule
    \end{tabular}}
    \label{hp:kg completion}
\end{table}

\begin{table}[ht]
    \centering
    \caption{Hyperparameters of GCN and RPE-GCN. $k$ is the embedding size, $\gamma$ is the fixed margin in the loss function, $L$ is the number of GCN layers, $\lambda$ is the weight coefficient, and $L2$ is the L2 regularizer weight.}
    \resizebox{1\columnwidth}!{\begin{tabular}{lccccccc}
        \toprule
        Dataset & $k$ & $\gamma$ & $L$ & $\lambda$& $L2$ & learning rate & dropout\\
        \midrule
         ${\rm DBP_{ZH-EN}}$ & 128& 1.0 &2&0.5&0.01&0.001&0.2\\
        ${\rm DBP_{JA-EN}}$ & 128& 1.0 &2&0.5&0.01&0.001&0.2\\
        ${\rm DBP_{FR-EN}}$ & 128& 1.0&2&0.5&0.01&0.001&0.2\\
        ${\rm DBP-WD}$& 128& 1.0&2&0.5&0.01&0.001&0.2\\
        ${\rm DBP-YG}$& 128& 1.0&2&0.5&0.01&0.001&0.2\\
        \bottomrule
    \end{tabular}}
    \label{hp:entity alignment}
\end{table}


\section{Ablation experiments}
In Table \ref{ablation results}, we show the ablation study of the aggregation strategy in \eqref{aggragation}. From the results, we find the strategy---which gets the entity embedding  by aggregating the hidden states of all the layers---can boost the performance on H@1 and MRR for both GCN and RPE-GCN models. It demonstrates the effectiveness of the used strategy. Also, we can see that the used strategy can only obtain comparable performance on H@10 to the strategy that only uses the hidden outputs at the last layer,
which means that our used strategy mainly increase the percentage of the targets that  have been correctly ranked first.



\section{Hyperparameters}\label{hyperparameters}
We list the hyperparameters of our proposed methods on the entity alignment and KG completion tasks in Tables  \ref{hp:entity alignment}  and \ref{hp:kg completion}, respectively. To make a fair comparison, we set the hyperparameters of our models and the baseline models to the same. We use Adagrad \cite{adagrad} as the optimizer on the entity alignment task, and use Adam \cite{adam} as the optimizer on the KG completion task.
Notice that, the best hyperparameter settings of GCN and RotatE are taken from MuGNN \cite{mugnn} and RotatE \cite{rotate}, respectively.

\end{document}